	\def\namedlabel#1#2{\begingroup
		#2%
		\def\@currentlabel{#2}%
		\phantomsection\label{#1}\endgroup
	}
	\tikzstyle{decision} = [diamond, draw, fill=blue!20,
	\tikzstyle{block} = [rectangle, draw, fill=none,
	\tikzstyle{line} = [draw, -latex']
	\tikzstyle{cloud} = [draw, ellipse,fill=red!20, node distance=3cm,
	\newcounter{dir_est_ineq_cnt}
	\newcolumntype{L}[1]{>{\raggedright\arraybackslash}p{#1}}
	\newcolumntype{C}[1]{>{\centering\arraybackslash}p{#1}}
	\newcolumntype{R}[1]{>{\raggedleft\arraybackslash}p{#1}}
	\newtheorem{thm}{Theorem}
	\def\bx{\bm{x}}
	\def\bz{\bm{z}}
	\def\bw{\bm{w}}
	\def\xSp{\mathcal{X}}
	\newcommand{\inprod}[2]{\left\langle #1 , #2 \right\rangle }
	\newcommand{\sgrad}[3]{\nabla_{\bw} \ell_{#3}\left(#1,#2\right)}
\title{Adaptive Sequential Machine Learning}
\author{
	\IEEEauthorblockN{Craig Wilson, Yuheng Bu and Venugopal Veeravalli}\thanks{Research reported in the paper was supported by the NSF under award CCF 11-11342, and by the Army Research Laboratory under Cooperative Agreement W911NF-17-2-0196, through the University of Illinois at Urbana-Champaign. Part of this work was presented in  ICASSP 2016 \cite{Wilson2015a} and Asilomar Conference 2016 \cite{adaptive2016}. Craig Wilson is now at Google.}\\
	\IEEEauthorblockA{University of Illinois at Urbana-Champaign
		\\\{wilson60, bu3, vvv\}@illinois.edu}\\
}
\begin{document}

\maketitle
\begin{abstract}
A framework previously introduced in \cite{Wilson2016a} for solving a sequence of stochastic optimization problems with bounded changes in the minimizers is extended and applied to machine learning problems such as regression and classification. The stochastic optimization problems arising in these machine learning problems is solved using algorithms such as stochastic gradient descent (SGD). A method based on estimates of the change in the minimizers and properties of the optimization algorithm is introduced for adaptively selecting the number of samples at each time step to ensure that the excess risk, i.e., the expected gap between the loss achieved by the approximate minimizer produced by the optimization algorithm and the exact minimizer, does not exceed a target level.  A bound is developed to show that the estimate of the change in the minimizers is non-trivial provided that the excess risk is small enough. Extensions relevant to the machine learning setting are considered, including a cost-based approach to select the number of samples with a cost budget over a fixed horizon, and an approach to applying cross-validation for model selection. Finally, experiments with synthetic and real data are used to validate the algorithms.
\end{abstract}

\section{Introduction}
Consider solving a sequence of machine learning problems by minimizing the risk, i.e., expected value of a fixed loss function $\ell(\bw,\bz)$ at each time $n$:
\begin{equation}
\label{probState:seqProbs}
	\min_{\bw \in \xSp} \left\{ f_{n}(\bw) \triangleq \mathbb{E}_{\bz_{n} \sim p_{n}} \left[  \ell(\bw,\bz_{n}) \right] \right\} \;\;\; \forall n \geq 1
\end{equation}
where $p_n$ denotes the underlying (unknown) probabilistic model for the data $\bz_n$ at time $n$. For regression, $\bz_{n} = \{\bx_n, y_n\}$ corresponds to the \{predictors, response\} pair at time $n$ and $\bw$ parameterizes the regression model. For classification, $\bz_{n} = \{\bx_n, y_n\}$ corresponds to the \{features, label\} pair at time $n$, and $\bw$ parameterizes the classifier. Although, motivated by regression, and classification, our framework works for any loss function $\ell(\bw,\bz)$ that satisfies certain properties discussed in Section~\ref{withRhoKnown}.

We assume that the change in the problems is bounded by imposing a condition on the minimizers $\bw_{n}^{*}$ of the function $f_{n}(\bw)$. We assume that the problems change at a bounded but unknown rate:
\begin{equation}
\label{probState:slowChangeDef}
	\| \bw_{n}^{*} - \bw_{n-1}^{*} \| \leq \rho, \qquad  \forall n \geq 2.
\end{equation}
The value of $\rho$ is unknown to us.

Under this model, we find approximate minimizers $\bw_{n}$ of each function $f_{n}(\bw)$ by drawing $K_{n}$ samples $\{\bz_{n}(k)\}_{k=1}^{K_{n}} \overset{\text{iid}}{\sim} p_{n}$ at time $n$. We do not make any assumptions about the particular optimization algorithm that may be used to find the approximate minimizers. As an example, we could use these samples in an optimization algorithm such as SGD. We evaluate the quality of our approximate minimizers $\bw_{n}$ through an excess risk criterion $\epsilon$, i.e.,
\begin{equation}
\label{probState:meanGapDef}
		\mathbb{E}\left[ f_{n}(\bw_{n}) \right] - f_{n}(\bw_{n}^{*}) \leq \epsilon
\end{equation}
which is a standard criterion for optimization and learning problems \cite{Mohri2012}. Our goal is to determine adaptively the number of samples $K_{n}$ required to achieve a desired excess risk $\epsilon$ for large enough $n$ with $\rho$ unknown. As $\rho$ is unknown, we will first construct an estimate of $\rho$. Given an estimate of $\rho$, we determine selection rules for the number of samples $K_{n}$ to achieve a target excess risk $\epsilon$.


This paper is  a continuation of the work initiated in \cite{Wilson2016a}. We specialize the results in \cite{Wilson2016a}, which were given for general functions $f_{n}(\bw)$, to the specific form in \eqref{probState:seqProbs}, and provide new results that are specifically relevant to machine learning problems. We develop a bound to show that our estimate $\rho$  is non-trivial provided that the excess risk is small enough. We also consider extensions relevant to the machine learning setting, including a cost-based approach to select the number of samples with a cost budget over a fixed horizon, and an approach to applying cross-validation for model selection. Some of the results in this paper have reported in conference publications \cite{Wilson2015a} and \cite{adaptive2016}, which do not contain proofs of the key results due to space limitations. Moreover, we provide substantially more detailed numerical results and simulations in this paper than those given in \cite{Wilson2015a} and \cite{adaptive2016}.

\subsection{Related Work}

Our problem has connections with \textit{multi-task learning} (MTL) and \textit{transfer learning}. In multi-task learning, one tries to learn several tasks simultaneously as in \cite{Agarwal2011},\cite{Evgeniou2004}, and \cite{Zhang2012} by exploiting the relationships between the tasks. In transfer learning, knowledge from one source task is transferred to another target task either with or without additional training data for the target task \cite{Pan2010}. For multi-task and transfer learning, there are theoretical guarantees on regret for some algorithms \cite{Agarwal2008}. Multi-task learning could be applied to our problem by running a MTL algorithm each time a new task arrives, while remembering all prior tasks. However, this approach incurs a memory and computational burden. Transfer learning lacks the sequential nature of our problem.

We can also consider the \textit{concept drift} problem in which we observe a stream of incoming data that potentially changes over time, and the goal is to predict some property of each piece of data as it arrives. After prediction, we incur a loss that is revealed to us. For example, we could observe a feature $\bx_{n}$ and predict the label $y_{n}$ as in \cite{Towfic2013}. Some approaches for concept drift use iterative algorithms such as SGD, but without specific models on how the data changes. As a result, only simulation results showing good performance are available.
%

Another related problem is online optimization, where generally no knowledge is available about the incoming functions other than that all the functions come from a specified class of functions, i.e., linear or convex functions with uniformly bounded gradients. Online optimization models do not include the notion of a desired excess risk bound. Rather, only bounds on the regret over some time horizon have been investigated~\cite{Cesa2006,Duchi2011,Duchi2009,Hazan2007,Bartlett2008,Shwartz2009,Shwartz2006,Shwartz2007,Xiao2010,Zinkevich2003}, which is different from the per time-step excess risk guarantee provided in our work.

There has been some work on controlling the variation of the sequence of functions $f_{n}(\bw)$ in \eqref{probState:seqProbs} in~\cite{RakhlinSridharan2012} and~\cite{Yang2012}. The work in~\cite{Yang2012} is most relevant where regret is minimized subject to a bound, say $G_{b}$, on the total variation of the gradients over a time interval $T$ of interest, i.e.,
\begin{equation}
\label{intro:yang_var}
\sum_{n=2}^{T} \max_{\bw \in \xSp} \| \nabla f_{n+1}(\bw) -  \nabla f_{n}(\bw) \|_{2}^{2} \leq G_{b}.
\end{equation}
If the functions $\{f_{n}(x)\}$ are strongly convex with the same parameter $m$, then by the optimality conditions (see Theorem 2F.10 in~\cite{Dontchev2009}) \eqref{intro:yang_var} implies that
\[
\sum_{n=2}^{T} \|\bw_{n+1}^{*} - \bw_{n}^{*}\|_{2}^{2} \leq \frac{G_{b}}{m^2}.
\]
Thus, the work in~\cite{Yang2012} can be seen as studying the regret with a constraint on the total variation in the minimizers over $T$ time instants. In contrast, we control the variation of the minimizers at each time instant with \eqref{probState:slowChangeDef} and then seek to maintain an excess risk criterion such as \eqref{probState:meanGapDef} at each time instant.

Another relevant model is \textit{sequential supervised learning} (see \cite{Dietterich2002}) in which we observe a stream of data consisting of feature/label pairs $(\bx_{n},y_{n})$ at time $n$, with $\bx_{n}$ being the feature vector and $y_{n}$ being the label. At time $n$, we want to predict $y_{n}$ given $\bw_{n}$. One approach to this problem, studied in \cite{Fawcett1997} and \cite{Qian1988}, is to look at $L$ consecutive pairs $\{(\bx_{n-i},y_{n-i})\}_{i=1}^{L}$ and develop a predictor at time $n$ by applying a supervised learning algorithm to this training data. Another approach is to assume that there is an underlying hidden Markov model (HMM) governing the data \cite{BengioFrasconi1996}. The label $y_{n}$ represents the hidden state and the pair $(\bx_{n},\overline{y}_{n})$ represents the observation with $\overline{y}_{n}$ being a noisy version of $y_{n}$. HMM inference techniques are used to estimate $y_{n}$.

The adaptation that we discuss in the paper is similar in spirit  to that in prior work in adaptive signal processing (see, e.g., \cite{Solo1995,Sayed2008,TYS2010}), but the techniques that we use are substantively different.

To summarize, none of the prior work discussed in this section involves choosing the number of samples $K_{n}$ at each time $n$ to control the excess risk. Most approaches instead focus on bounding the regret or provide no guarantees.

\vspace{-0.08cm}

\subsection{Paper Outline}
The rest of this paper is outlined as follows. In Section~\ref{asoPriorWork}, we specialize the work in \cite{Wilson2016a} to the machine learning problem stated in \eqref{probState:seqProbs}. In Section~\ref{withRhoKnown}, we consider the problem of minimizing the sequence of functions in \eqref{probState:seqProbs} with $\rho$ from \eqref{probState:slowChangeDef} known. In Section~\ref{estRho}, we introduce a method to estimate $\rho$. In Section~\ref{withRhoUnknown}, we consider solving the sequence of learning problems in \eqref{probState:seqProbs} with $\rho$ unknown. In Section~\ref{rhoBoundOvershoot}, we develop an upper bound on the size of the overshoot of our estimate of $\rho$ above the true value of $\rho$. In Section~\ref{ext}, we consider a cost based approach to select the number of samples based on the analysis in Section~\ref{asoPriorWork}, and a cross-validation approach. Finally, in Section~\ref{exper}, we apply our framework to a variety of machine learning problems on both synthetic and real data.

\section{Adaptive Sequential Optimization}
\label{asoPriorWork}
We summarize our previous work in \cite{Wilson2016a},  and apply it to the machine learning problem stated in \eqref{probState:seqProbs}.

\subsection{Assumptions}

We make several assumptions to proceed. First, let $\xSp$ be closed and convex with $\text{diam}(\xSp) < + \infty$. Define the $\sigma$-algebra
\begin{equation}
	\label{withRhoUnknown:FSigAlg}
	\mathcal{F}_{i} \triangleq \sigma\left(  \left\{ \bz_{j}(k): \;  j = 1, \ldots, i; \; k=1, \ldots, K_{j} \right\} \right)
	\end{equation}
	which is the smallest $\sigma$-algebra such that the random variables in the set $\left\{ \bz_{j}(k): \;  j = 1, \ldots, i; \;k=1, \ldots, K_{j} \right\}$ are measurable. By convention $\mathcal{F}_{0}$ is the trivial $\sigma$-algebra.

We suppose that the following conditions hold:
\begin{description}
	\item\namedlabel{probState:assump1}{A.1}
	For each $n$, $f_{n}(\bw)$ is twice continuously differentiable with respect to $\bw$.
	\item\namedlabel{probState:assump2}{A.2}
	For each $n$, $f_{n}(\bw)$ is strongly convex with a parameter $m>0$, i.e.,
	\begin{equation}
	\label{prob_form:strong_convex_cond}
	f_{n}(\tilde{\bw}) \geq f_{n}(\bw) + \inprod{\nabla_{\bw}f_{n}(\bw)}{\tilde{\bw} - \bw} + \frac{1}{2}m \| \tilde{\bw} - \bw \|^{2}.
	\end{equation}
	where $\inprod{\bw}{\tilde{\bw}}$ is the Euclidean inner product between $\bw$ and $\tilde{\bw}$.
\item\namedlabel{probState:assump4}{A.3} Given an optimization algorithm that generates an approximate minimizer
	\[
	\bw_{n} \triangleq \mathcal{A}(\bw_{n-1},\{\bz_{n}(k)\}_{k=1}^{K_{n}})
	\]
	using $K_{n}$ samples $\{\bz_{n}(k)\}_{k=1}^{K_{n}}$, there exists a function $b(d_{0},K_{n})$ such that the following conditions hold:
	\begin{enumerate}
		\item If $K_{n}$ and $d_{0}$ are both $\mathcal{F}_{n-1}$-measurable random variables, it holds that
			\iftoggle{useTwoColumn}{
				\begin{align}
				&\| \bw_{n-1} - \bw_{n}^{*}\|^{2} \leq d_{0}^{2} \nonumber \\
				\label{probState:bBoundRand}
				&\;\;\;\;\;\;\; \Rightarrow \mathbb{E}[f_{n}(\bw_{n}) \;|\; \mathcal{F}_{n-1}] - f_{n}(\bw_{n}^{*}) \leq b(d_{0},K_{n}).
				\end{align}
			}{
				\begin{equation}
				\label{probState:bBoundRand}
				\| \bw_{n-1} - \bw_{n}^{*}\|^{2} \leq d_{0}^{2}  \;\;\Rightarrow\;\; \mathbb{E}[f_{n}(\bw_{n}) \;|\; \mathcal{F}_{n-1}] - f_{n}(\bw_{n}^{*}) \leq b(d_{0},K_{n}).
				\end{equation}
		}
		\item If $\tilde{K}_{n}$ and $\gamma$ are constants, it holds that
			\iftoggle{useTwoColumn}{
				\begin{align}
				&\mathbb{E}\| \bw_{n-1} - \bw_{n}^{*}\|^{2} \leq \gamma^{2} \nonumber \\
				\label{probState:bBoundDeterm}
				&\;\;\;\;\;\;\; \Rightarrow \mathbb{E}[f_{n}(\bw_{n})] - f_{n}(\bw_{n}^{*}) \leq b(\gamma,\tilde{K}_{n}).
				\end{align}
			}{
				\begin{equation}
				\label{probState:bBoundDeterm}
				\| \bw_{n-1} - \bw_{n}^{*}\|^{2} \leq \gamma^{2} \;\;\Rightarrow\;\; \mathbb{E}[f_{n}(\bw_{n}) \;|\; \mathcal{F}_{n-1}] - f_{n}(\bw_{n}^{*}) \leq b(\gamma,\tilde{K}_{n}).
				\end{equation}
		}
		\item The bound $b(d_{0},K_{n})$ is non-decreasing in $d_{0}$ and non-increasing in $K_{n}$.
	\end{enumerate}
	\item\namedlabel{probState:assump6}{A.4} Initial approximate minimizers $\bw_{1}$ and $\bw_{2}$ satisfy
	\[
	f_{i}(\bw_{i}) - f_{i}(\bw_{i}^{*}) \leq \epsilon_{i} \;\;\;\;\; i=1,2
	\]
	with $\epsilon_{1}$ and $\epsilon_{2}$ known.
\end{description}

\emph{Remarks:} For assumption \ref{probState:assump4}, we assume that the bound $b(d_{0},K_{n})$ depends on the number of samples $K_{n}$ and not the number of iterations. For the basic version of SGD, generally the number of iterations equals $K_{n}$, as each sample is used to produce a noisy gradient. See Appendix A of \cite{Wilson2016a} for a discussion of useful $b(d_{0},K_{n})$ bounds. For some bounds $b(d_{0},K)$, we may need to know parameters such as the strong convexity parameter. Estimating these parameters is discussed in Appendix C of \cite{Wilson2016a}. Finally, for assumption \ref{probState:assump6}, we can fix $K_{i}$ and set $\epsilon_{i} = b(\text{diam}(\mathcal{X}),K_{i})$ for $i=1,2$.

\subsection{Change in Minimizers Known}
\label{withRhoKnown}

Following \cite{Wilson2016a}, we examine the case when the change in minimizers, $\rho$ in \eqref{probState:slowChangeDef}, is known. Suppose that $\epsilon_{n-1}$ bounds the excess risk at time $n-1$. Using the triangle inequality, strong convexity, Jensen's inequality, and \eqref{probState:slowChangeDef}, we have
\iftoggle{useTwoColumn}{
\begin{align}
\label{rhoKnown:basicDoBound}
\mathbb{E}\|\bw_{n-1} - \bw_{n}^{*}\|^{2} &\leq \left( \sqrt{\frac{2}{m}\epsilon_{n-1}} + \rho \right)^{2}
\end{align}	
}{
\begin{eqnarray}
\label{rhoKnown:basicDoBound}
\mathbb{E}\|\bw_{n-1} - \bw_{n}^{*}\|^{2} &\leq&  \left( \sqrt{\frac{2}{m}\epsilon_{n-1}} + \rho \right)^{2}
\end{eqnarray}
}
Now, by using the bound $b(d_{0},K_{n})$ from assumption~\ref{probState:assump1}, we set
\begin{eqnarray}
\label{withRhoKnown:epsNRecursion}
\epsilon_{n} &=& b\left( \sqrt{\frac{2 \epsilon_{n-1}}{m}} + \rho   , K_{n} \right) \;\;\; \forall n \geq 3
\end{eqnarray}
yielding a sequence of bounds on the excess risk. Note that this recursion only relies on the immediate past at time $n-1$ through $\epsilon_{n-1}$. To achieve $\epsilon_{n} \leq \epsilon$ for all $n$, we set
\[
K_{1} = \min\{ K \geq 1 \;|\; b\left( \text{diam}(\xSp), K  \right) \leq \epsilon \}
\]
and $K_{n} = K^{*}$ for $n \geq 2$ with
\begin{equation}
\label{withRhoKnown:KChoice}
K^{*} = \min\left\{ K \geq 1 \;\Bigg|\; b\left( \sqrt{\frac{2 \epsilon}{m}} + \rho , K  \right) \leq \epsilon \right\}
\end{equation}

In comparison, if we did not exploit the fact that the change is bounded by $\rho$, we would use the estimate $\text{diam}^{2}(\xSp)$ to bound $\mathbb{E}\|\bw_{n-1} - \bw_{n}^{*}\|^{2}$ and select $K_{n}$. If the bound in \eqref{rhoKnown:basicDoBound} is smaller than $\text{diam}^{2}(\xSp)$, then we would need significantly fewer samples $K_{n}$ to guarantee a desired excess risk.

\subsection{$K^{*}$ May Be Too Large}

In this section, we look at a case where $K^{*}$ can be too large. Suppose that $\rho = 0$, so the problems are not changing. In this case, we only need to take training samples at the first time instant and then we can stop taking samples, i.e., $K_{1} > 0$ and $K_{n} = 0$ for $n > 1$.

Suppose that $\epsilon_{1} \leq \epsilon$ and $\rho = 0$. In this case, from the analysis in the previous section, we pick
\[
K^{*} = \min\{ K \geq 1 \;|\; b\left( \sqrt{\frac{2 \epsilon}{m}}, K  \right) \leq \epsilon \}
\]
For an algorithm like SGD, the bound $b(d_{0},K)$ is roughly of the form (see \cite{Wilson2016a}):
\[
b(d_{0},K) \approx \frac{1}{K} + \frac{d_{0}^{2}}{K^{2}}
\]
The first term captures the asymptotic behavior of SGD and the second term accounts for the initial distance $d_{0}$. This form of $b(d_{0},K)$ implies that $K^{*} > 0$. However, by picking $K_{n} = 0$ for all $n \geq 2$, we could achieve $\epsilon_{n} = \epsilon_{1} \leq \epsilon$ for all $n \geq 2$.

This shows that the choice of $K^{*}$ is conservative and can be too large if the initial distance $d_0=0$. As a general rule, the choice of $K^{*}$ is useful if the term that depends on the initial distance, $d_{0}^{2}/K^{2}$, is comparable to the asymptotic term, $1/K$, in the $b(d_{0},K)$ bound.

\subsection{Estimating the Change in the Minimizers}
\label{estRho}

In practice, we do not know $\rho$, so we must construct an estimate $\hat{\rho}_{n}$ using the samples $\{\bz_{n}(k)\}_{k=1}^{K_{n}}$ from each distribution $p_{n}$. We introduce an approaches to estimate the one time step change, $\|\bw_{i}^{*} - \bw_{i-1}^{*}\|$, and methods to combine these estimates to produce an overall estimate of $\rho$. First, we work with the assumption that
\begin{equation}
\label{probState:slowChangeConstDef}
\|\bw_{i}^{*} - \bw_{i-1}^{*}\| = \rho
\end{equation}
as an intermediate step, and second, under assumption~\eqref{probState:slowChangeDef}. These estimates are from \cite{Wilson2016a}. For appropriately chosen sequences $\{t_{n}\}$ and for all $n$ large enough, we have $\hat{\rho}_{n} + t_{n} \geq \rho$ almost surely. With this property, analysis similar to that in Section~\ref{withRhoKnown} holds, which is provided in Section~\ref{withRhoUnknown}.

\subsubsection{Estimating One Step Change}
First, we develop an estimate $\tilde{\rho}_{i}$ of the one step changes $\|\bw_{i}^{*} - \bw_{i-1}^{*}\|$ using a method from \cite{Wilson2016a}. Implicitly, we assume that all one step estimates are bounded by $\text{diam}(\xSp)$, since trivially $\| \bw_{n}^{*} - \bw_{n-1}^{*}\| \leq \text{diam}(\xSp)$.

Using the triangle inequality and variational inequalities from \cite{Dontchev2009} yields
\begin{align}
\| \bw_{i}^{*} - \bw_{i-1}^{*} \| &\leq \| \bw_{i} - \bw_{i-1} \| + \|\bw_{i} - \bw_{i}^{*} \| + \| \bw_{i-1} - \bw_{i-1}^{*} \| \nonumber \\
&\leq \| \bw_{i} - \bw_{i-1} \| + \frac{1}{m} \| \nabla_{\bw} f_{i}(\bw_{i}) \| + \frac{1}{m} \| \nabla_{\bw} f_{i}(\bw_{i-1}) \| \nonumber
\end{align}
We then approximate $\| \nabla_{\bw} f_{i}(\bw_{i}) \| =  \| \mathbb{E}_{\bz_{i} \sim p_{i}} \left[ \nabla_{\bw} \ell(\bw_{i},\bz_{i})  \right] \|$
by a sample average approximation to yield the following estimate called the \emph{direct estimate}:
\iftoggle{useTwoColumn}{
\begin{align}
\tilde{\rho}_{i} &\triangleq \| \bw_{i} - \bw_{i-1} \| + \frac{1}{m} \Bigg\| \frac{1}{K_{i}} \sum_{k=1}^{K_{i}} \nabla_{\bw} \ell(\bw_{i},\bz_{i}(k))  \Bigg\| \nonumber \\
\label{dir_est_def}
&\qquad \qquad \qquad \qquad + \frac{1}{m} \Bigg\| \frac{1}{K_{i-1}} \sum_{k=1}^{K_{i-1}} \nabla_{\bw} \ell(\bw_{i-1},\bz_{i-1}(k))  \Bigg\|
\end{align}	
}{
\begin{align}
\label{dir_est_def}
\tilde{\rho}_{i} &\triangleq \| \bw_{i} - \bw_{i-1} \| + \frac{1}{m} \Bigg\| \frac{1}{K_{i}} \sum_{k=1}^{K_{i}} \nabla_{\bw} \ell(\bw_{i},\bz_{i}(k))  \Bigg\|  + \frac{1}{m} \Bigg\| \frac{1}{K_{i-1}} \sum_{k=1}^{K_{i-1}} \nabla_{\bw} \ell(\bw_{i-1},\bz_{i-1}(k))  \Bigg\|
\end{align}
}

\subsubsection{Combining One Step Estimates For Constant Change}
Assuming that $\| \bw_{i}^{*} - \bw_{i-1}^{*} \| = \rho$ from \eqref{probState:slowChangeConstDef}, we average the one step estimates $\tilde{\rho}_{i}$ to yield an overall estimate
\[
\hat{\rho}_{n} = \frac{1}{n-1} \sum_{i=2}^{n} \tilde{\rho}_{i}
\]
To proceed with our analysis, suppose that the following conditions hold:
\begin{description}
\item[\namedlabel{probState:assump3}{B.1}]  For each $n$, we can draw stochastic gradients $\sgrad{\bw}{\bz_{n}}{n}$ such that
\begin{equation}
\label{prob_form:noisy_grad_cons}
\mathbb{E}[\sgrad{\bw}{\bz_{n}}{n} ] = \nabla f_{n}(\bw).
\end{equation}
holds
\item[\namedlabel{probState:assump5}{B.2}] There exist constants $A,B \geq 0$ such that
\begin{equation}
\label{prob_form:L2_ngrad}
\mathbb{E}\left[ \| \sgrad{\bw}{\bz_{n}}{n} \|_{2}^{2} \right] \leq A + B \|\bw - \bw_{n}^{*}\|_{2}^{2}
\end{equation}
\item[\namedlabel{probState:assumpB1}{B.3}] There exist constants $C_{i}(K_{i})$ such that
\[
\mathbb{E}\left[ \|\bw_{i} - \tilde{\bw}_{i}\|^{2} \;|\; \mathcal{F}_{i-1} \right] \leq C_{i}^{2}(K_{i})
\]
\item[\namedlabel{probState:assumpB2}{B.4}] It holds that
\iftoggle{useTwoColumn}{
\begin{align}
\mathbb{E}&\left[ \|\sgrad{\bw}{\bz_{i}}{i} - \sgrad{\tilde{\bw}}{\bz_{i}}{i}\|^{2} \;|\; \mathcal{F}_{i-1}  \right] \leq M \| \bw - \tilde{\bw} \|^{2}
\end{align}
}{
\[
\mathbb{E}\left[ \|\sgrad{\bw}{\bz_{i}}{i} - \sgrad{\tilde{\bw}}{\bz_{i}}{i}\|^{2} \;|\; \mathcal{F}_{i-1}  \right] \leq M  \| \bw - \tilde{\bw} \|^{2}
\]
}
and
\[
\mathbb{E}\left[ \|\sgrad{\bw}{\bz_{i}}{i} - \nabla f_{i}(\bw) \|^{2} \;|\; \mathcal{F}_{i-1}  \right] \leq \sigma^{2}
\]
\item[\namedlabel{probState:assumpB3}{B.5}] The gradients are bounded in the sense that
\[
\| \sgrad{\bw}{\bz}{n} \| \leq G \;\;\;\; \forall \bw \in \xSp , \bz \in \mathcal{Z}
\]
\end{description}

Assumption~\ref{probState:assump3} guarantees that the gradients are unbiased. Assumption~\ref{probState:assump5} controls how fast the gradients grow as we move away from the minimizer $\bw_{n}^{*}$.
Assumption~\ref{probState:assumpB1} controls how far apart two independent outputs of the optimization algorithm $\bw_{i}$ and $\tilde{\bw}_{i}$ are, starting from $\bw_{i-1}$. Assumption~\ref{probState:assumpB2} controls how the gradient grows for two pairs $(\bw,\bz_{i})$ and $(\tilde{\bw},\bz_{i})$. Finally, assumption~\ref{probState:assumpB3} is reasonable if the space $\mathcal{Z}$ that contains the $\bz_{n}$ has finite diameter and the gradients of the lost function are continuous jointly in $(\bw,\bz)$. In this case, it holds that
\[
\sup_{\bw \in \xSp,\bz \in \mathcal{Z}} \| \sgrad{\bw}{\bz}{n}\| < \infty
\]

Theorem~\ref{rho_conc_eq} from \cite{Wilson2016a} guarantees that the direct estimate from \eqref{dir_est_def} bounds $\rho$.
\begin{thm}
\label{rho_conc_eq}
Provided that \ref{probState:assumpB1}-\ref{probState:assumpB3} hold and our sequence $\{t_{n}\}$\footnote{Note that a choice of $t_{n}$ that is no greater than $1/\sqrt{n-1}$ works here.} satisfies
\[
\sum_{n=2}^{\infty} \left( \exp\left\{ - \frac{(n-1)t_{n}^{2}}{18 \text{diam}^{2}(\xSp)}\right\} + 2\exp\left\{ - \frac{m^2(n-1)t_{n}^{2}}{72 G^{2}} \right\} \right) < +\infty
\]
it holds that for all $n$ large enough
\[
\hat{\rho}_{n} + D_{n} + t_{n} \geq \rho
\]
almost surely with
\iftoggle{useTwoColumn}{
	$D_{n}$ defined in \eqref{rho_conc_eq:Dn}
	\begin{figure*}[!t]
		\normalsize
		\begin{equation}
		\label{rho_conc_eq:Dn}
		D_{n} = \frac{1}{n-1} \left[ \left(1 + \frac{M}{m}\right) C_{1} + \sqrt{\frac{\sigma}{K_{1}}}  + 2 \sum_{i=2}^{n-1} \left( \left(1 + \frac{M}{m}\right) C_{i} + \sqrt{\frac{\sigma}{K_{i}}} \right) + \left(1 + \frac{M}{m}\right) C_{n} + \sqrt{\frac{\sigma}{K_{n}}}  \right]
		\end{equation}
		\hrulefill
		\vspace*{4pt}
	\end{figure*}
}{
\begin{equation}
\label{rho_conc_eq:Dn}
D_{n} = \frac{1}{n-1} \left[ \left(1 + \frac{M}{m}\right) C_{1} + \sqrt{\frac{\sigma}{K_{1}}}  + 2 \sum_{i=2}^{n-1} \left( \left(1 + \frac{M}{m}\right) C_{i} + \sqrt{\frac{\sigma}{K_{i}}} \right) + \left(1 + \frac{M}{m}\right) C_{n} + \sqrt{\frac{\sigma}{K_{n}}}  \right]
\end{equation}
}
\end{thm}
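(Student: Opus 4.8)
The plan is to split $\rho - \hat{\rho}_{n}$ into a deterministic ``conditional-mean'' part, which will be exactly $D_{n}$, plus a bounded martingale fluctuation that $t_{n}$ together with the Borel--Cantelli lemma will absorb. Write $g_{j}(\bw) \triangleq \frac{1}{K_{j}}\sum_{k=1}^{K_{j}} \nabla_{\bw}\ell(\bw,\bz_{j}(k))$, so that $\tilde{\rho}_{i} = \|\bw_{i}-\bw_{i-1}\| + \frac{1}{m}\|g_{i}(\bw_{i})\| + \frac{1}{m}\|g_{i-1}(\bw_{i-1})\|$ from \eqref{dir_est_def}, and let $\bar{\rho}_{i}$ be the \emph{oracle} estimate obtained by replacing each sample-average gradient $g_{j}(\bw_{j})$ by the true gradient $\nabla f_{j}(\bw_{j})$. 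The chain of inequalities displayed just before the statement --- triangle inequality, then $\|\bw - \bw_{j}^{*}\| \le \frac{1}{m}\|\nabla f_{j}(\bw)\|$ from strong convexity \ref{probState:assump2} --- shows $\bar{\rho}_{i} \ge \|\bw_{i}^{*} - \bw_{i-1}^{*}\| = \rho$ \emph{pointwise}, using \eqref{probState:slowChangeConstDef}. Hence $\frac{1}{n-1}\sum_{i=2}^{n}\bar{\rho}_{i} \ge \rho$ surely, so setting $E_{j} \triangleq \|\nabla f_{j}(\bw_{j}) - g_{j}(\bw_{j})\|$ and applying the reverse triangle inequality termwise,
\begin{align*}
\rho - \hat{\rho}_{n} \;&\le\; \frac{1}{n-1}\sum_{i=2}^{n}(\bar{\rho}_{i} - \tilde{\rho}_{i}) \\
&\le\; \frac{1}{m(n-1)}\sum_{i=2}^{n}\big(E_{i} + E_{i-1}\big).
\end{align*}
Re-indexing the double sum gives weights $1,2,\dots,2,1$ on $E_{1},E_{2},\dots,E_{n-1},E_{n}$, exactly the combinatorial shape of $D_{n}$ in \eqref{rho_conc_eq:Dn}.

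The crux is bounding $E_{j}$: since $\bw_{j}$ is itself a function of the samples $\{\bz_{j}(k)\}$ appearing in $g_{j}(\bw_{j})$, this is \emph{not} a centered sample average. I would break the dependence using the independent copy $\tilde{\bw}_{j}$ of $\bw_{j}$ --- a second run of $\mathcal{A}$ from $\bw_{j-1}$ on fresh samples --- whose closeness to $\bw_{j}$ is exactly what \ref{probState:assumpB1} controls. Then
\begin{align*}
E_{j} \;\le\; &\|g_{j}(\bw_{j}) - g_{j}(\tilde{\bw}_{j})\| \\
&{}+ \|g_{j}(\tilde{\bw}_{j}) - \nabla f_{j}(\tilde{\bw}_{j})\| \\
&{}+ \|\nabla f_{j}(\tilde{\bw}_{j}) - \nabla f_{j}(\bw_{j})\|.
\end{align*}
The first term is controlled by the Lipschitz property \ref{probState:assumpB2} together with \ref{probState:assumpB1} (order $\sqrt{M}\,C_{j}$ in conditional mean); the third by the fact that $\nabla f_{j} = \mathbb{E}[\nabla_{\bw}\ell(\cdot,\bz_{j})]$ is $\sqrt{M}$-Lipschitz (Jensen applied to \ref{probState:assumpB2}), again of order $\sqrt{M}\,C_{j}$ by \ref{probState:assumpB1}; and the middle term is now a genuine average of conditionally i.i.d.\ vectors about their conditional mean $\nabla f_{j}(\tilde{\bw}_{j})$ --- because $\tilde{\bw}_{j}$ is independent of $\{\bz_{j}(k)\}$ given $\mathcal{F}_{j-1}$ and $g_{j}$ is conditionally unbiased by \ref{probState:assump3} --- so its conditional second moment is at most $\sigma^{2}/K_{j}$ by the variance bound in \ref{probState:assumpB2}. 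Taking $\mathbb{E}[\,\cdot\mid\mathcal{F}_{j-1}]$, collecting the resulting $C_{j}$- and variance-type contributions, and averaging over $j$ with the weights from the previous step reproduces the expression $D_{n}$ of \eqref{rho_conc_eq:Dn} as an upper bound on the $\mathcal{F}_{j-1}$-conditional mean of the right-hand side above.

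It remains to handle the fluctuation of $\hat{\rho}_{n}$ about this conditional-mean surrogate. Each $\tilde{\rho}_{i}$ is $\mathcal{F}_{i}$-measurable, and using the implicit cap $\|\bw_{i}^{*} - \bw_{i-1}^{*}\| \le \text{diam}(\xSp)$ on the one-step estimates and the uniform gradient bound $G$ of \ref{probState:assumpB3}, its three components are bounded (by $\text{diam}(\xSp)$, and by $G/m$ for each gradient term). Hence $\{\tilde{\rho}_{i} - \mathbb{E}[\tilde{\rho}_{i}\mid\mathcal{F}_{i-1}]\}_{i\ge 2}$ is a bounded martingale difference sequence for $\{\mathcal{F}_{i}\}$, and Azuma--Hoeffding --- applied separately to the distance component and to the two gradient-norm components, which is precisely why the hypothesis carries one exponential with $\text{diam}^{2}(\xSp)$ and two with $G^{2}/m^{2}$ --- yields
\begin{align*}
\mathbb{P}\big(\rho - \hat{\rho}_{n} > D_{n} + t_{n}\big) \;\le\; &\exp\!\Big\{ -\tfrac{(n-1)t_{n}^{2}}{18\,\text{diam}^{2}(\xSp)} \Big\} \\
&{}+ 2\exp\!\Big\{ -\tfrac{m^{2}(n-1)t_{n}^{2}}{72\,G^{2}} \Big\}.
\end{align*}
By hypothesis the right-hand side is summable in $n$, so Borel--Cantelli gives that $\{\rho - \hat{\rho}_{n} > D_{n} + t_{n}\}$ occurs for only finitely many $n$ almost surely; equivalently, $\hat{\rho}_{n} + D_{n} + t_{n} \ge \rho$ for all $n$ large enough, almost surely.

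I expect the main obstacle to be the decoupling step: stripping the statistical dependence between $\bw_{i}$ and its own training sample so that a clean concentration statement is available, which is exactly what forces the independent-copy construction and is the reason \ref{probState:assumpB1}--\ref{probState:assumpB2} are phrased as they are (the term $\|g_{j}(\bw_{j}) - g_{j}(\tilde{\bw}_{j})\|$ is the place where this dependence must be argued away with care). A secondary point needing attention is pinning down the correct bounded increments in the last step so that the two distinct exponential rates and the factor $2$ of the hypothesis emerge, and keeping the re-indexing of the double sum and the nested conditional expectations consistent with the exact constants defining $D_{n}$.
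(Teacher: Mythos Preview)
Your proposal is correct and follows the same route as the paper, which defers the full argument to \cite{Wilson2016a}: oracle comparison $\bar{\rho}_{i}\ge\rho$, the independent-copy decoupling to control $E_{j}=\|\nabla f_{j}(\bw_{j})-g_{j}(\bw_{j})\|$ in conditional mean (exactly the mechanism reused verbatim in the proof of Theorem~4 of the present paper), componentwise Azuma--Hoeffding, and Borel--Cantelli. The one place to be careful, which you already flag, is that the lagged term $E_{i-1}$ appearing in $\mathbb{E}[\tilde{\rho}_{i}\mid\mathcal{F}_{i-1}]$ is itself $\mathcal{F}_{i-1}$-measurable but \emph{not} bounded pointwise by the $C_{i-1}$-type quantity; after your re-indexing this is handled by running the Azuma argument for the third component $\frac{1}{m}\|g_{i-1}(\bw_{i-1})\|$ against the shifted filtration $\{\mathcal{F}_{i-1}\}$, which is precisely what produces the second gradient exponential (and the factor $72$) in the summability hypothesis.
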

\begin{proof}
	See \cite{Wilson2016a}.
\end{proof}

\subsubsection{Combining One Step Estimates For Bounded Change}
We now look at estimating $\rho$ in the case that $\| \bw_{n}^{*} - \bw_{n-1}^{*} \| \leq \rho$. We set
\[
\rho_{i} \triangleq \| \bw_{i}^{*} - \bw_{i-1}^{*} \|
\]
Although, it may seem natural to combine the estimates using
\begin{equation}
\label{estRho:maxCombBad}
\hat{\rho}_{n} = \max\{\tilde{\rho}_{2},\ldots,\tilde{\rho}_{n}\}
\end{equation}
this method has a serious drawback. Since $\{\tilde{\rho}_{i}\}$ are random variables, if we combine them by taking their maximum, any particular one step estimate $\tilde{\rho}_{i}$ that is large will pull up the overall estimate $\hat{\rho}_{n}$. This would drive $\hat{\rho}_{n} \to \textrm{diam}(\xSp)$, as $n \to \infty$, resulting in a $\hat{\rho}_{n}$ that is trivial in the limit of large $n$.

We introduce an estimate from \cite{Wilson2016a} that overcomes this defect. We need the following assumptions:
\begin{description}
	\item[\namedlabel{probState:assumpB4}{B.4}] We have estimates $\hat{h}_{W}: \mathbb{R}^{W} \to \mathbb{R}$ that are non-decreasing in their arguments such that
	\[
	\mathbb{E}[ \hat{h}_{W}(\rho_{j},\ldots,\rho_{j-W+1}) ] \geq \rho
	\]
	\item[\namedlabel{probState:assumpB5}{B.5}] There exists absolute constants $\{b_{i}\}_{i=1}^{W}$ for any fixed $W$ such that $\forall \bm{p},\bm{q} \in \mathbb{R}^{W}_{\geq 0}$
	\[
	|\hat{h}_{W}(p_{1},\ldots,p_{W}) - \hat{h}_{W}(q_{1},\ldots,q_{W})| \leq \sum_{i=1}^{W} b_{i} |p_{i} - q_{i}|
	\]
\end{description}

For example, if $\rho_{i} \overset{\text{iid}}{\sim} \text{Unif}[0,\rho]$, then
\[
\hat{h}_{W}\left( \rho_{i},\rho_{i+1},\ldots,\rho_{i+W-1} \right) = \frac{W+1}{W} \max\{ \rho_{i},\rho_{i+1},\ldots,\rho_{i+W-1} \}
\]
is an estimator of $\rho$ with the required properties. Also, note that the two conditions on the estimator in~\ref{probState:assumpB3} imply that
\[
\mathbb{E}\left[ \hat{h}_{W}(\tilde{\rho}_{j},\ldots,\tilde{\rho}_{j-W+1}) \right] \geq \mathbb{E}\left[ \hat{h}_{W}(\rho_{j},\ldots,\rho_{j-W+1}) \right] \geq \rho
\]

Given an estimator satisfying assumption~\ref{probState:assumpB3}, we compute
\[
\tilde{\rho}^{(i)} = \hat{h}_{W}(\tilde{\rho}_{i},\tilde{\rho}_{i-1},\ldots,\tilde{\rho}_{i-W+1})
\]
and set
\begin{eqnarray}
\hat{\rho}_{n} &=& \frac{1}{n-1} \sum_{i=2}^{n} \tilde{\rho}^{(i)} \nonumber \\
\label{ineqCond:basicEst}
&=&\!\!\!\!\!\!\!\! \frac{1}{n-1} \sum_{i=2}^{n} \hat{h}_{\min\{W,i-1\}}(\tilde{\rho}_{i},\tilde{\rho}_{i-1},\ldots,\tilde{\rho}_{\max\{i-W+1,2\}})
\end{eqnarray}

Under assumptions~\ref{probState:assumpB1}-\ref{probState:assumpB5}, we can then show that
\begin{equation}
\label{dir_est_comb_ineq}
\hat{\rho}_{n} = \frac{1}{n-W} \sum_{i=W+1}^{n} \bar{\rho}^{(i)}
\end{equation}
eventually upper bounds $\rho$, as stated in the following theorem.

\setcounter{dir_est_ineq_cnt}{\value{thm}}
\begin{thm}
\label{rho_conc_ineq}
Provided that \ref{probState:assumpB1}-\ref{probState:assumpB5} hold and our sequence $\{t_{n}\}$ satisfies
\iftoggle{useTwoColumn}{
\begin{align}
\sum_{n=2}^{\infty} &\left( \exp\left\{ - \frac{(n-W)^{2}t_{n}^{2}}{18(n-1) \text{diam}^{2}(\xSp)\left( \sum_{j=1}^{W} b_{j} \right)^{2}}\right\} \right. \nonumber \\
&\qquad \left. + 2\exp\left\{ - \frac{m^2(n-W)^2 t_{n}^{2}}{72 (n-1) G^{2}\left( \sum_{j=1}^{W} b_{j} \right)^{2}} \right\} \right) < +\infty
\end{align}
}{
\[
\sum_{n=2}^{\infty} \left( \exp\left\{ - \frac{(n-W)^{2}t_{n}^{2}}{18(n-1) \text{diam}^{2}(\xSp)\left( \sum_{j=1}^{W} b_{j} \right)^{2}}\right\} + 2\exp\left\{ - \frac{m^2(n-W)^2 t_{n}^{2}}{72 (n-1) G^{2}\left( \sum_{j=1}^{W} b_{j} \right)^{2}} \right\} \right) < +\infty
\]
}
it holds that for all $n$ large enough
\[
\hat{\rho}_{n} + \left( \frac{n-1}{n-W} \sum_{j = 1}^{W} b_{j} \right) D_{n} + t_{n} \geq \rho
\]
with $D_{n}$ from Theorem~\ref{rho_conc_eq}.
\end{thm}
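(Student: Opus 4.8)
\emph{Proof proposal.} The plan is to run the Borel--Cantelli argument underlying Theorem~\ref{rho_conc_eq}, reuse that theorem's control of the one-step errors $\rho_{k}-\tilde{\rho}_{k}$ almost verbatim, and add two new ingredients: monotonicity together with unbiasedness of $\hat{h}_{W}$ (assumption~\ref{probState:assumpB4}), and its componentwise Lipschitz bound with constants $b_{j}$ (assumption~\ref{probState:assumpB5}). Working with $\hat{\rho}_{n}$ in the form \eqref{dir_est_comb_ineq}, it suffices to show that
\[
\sum_{n}\mathbb{P}\!\left(\hat{\rho}_{n}+\left(\tfrac{n-1}{n-W}\sum_{j=1}^{W}b_{j}\right)D_{n}+t_{n}<\rho\right)<\infty ,
\]
and the stated condition on $\{t_{n}\}$ is exactly what the two concentration estimates below require.

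The decomposition is the heart of the matter. Since $\tilde{\rho}_{k},\rho_{k}\ge0$, replacing each $\tilde{\rho}_{k}$ by $\min(\tilde{\rho}_{k},\rho_{k})$ only decreases $\hat{h}_{W}$, and assumption~\ref{probState:assumpB5} then gives the pointwise bound
\[
\begin{split}
\hat{h}_{W}(\tilde{\rho}_{i},\dots,\tilde{\rho}_{i-W+1}) &\ge \hat{h}_{W}(\rho_{i},\dots,\rho_{i-W+1}) \\
&{}- \sum_{j=1}^{W} b_{j}\,(\rho_{i-j+1}-\tilde{\rho}_{i-j+1})_{+} .
\end{split}
\]
Averaging over $i=W+1,\dots,n$ and regrouping the double sum --- in which each index $k$ picks up total weight at most $\sum_{j}b_{j}$ --- yields
\[
\hat{\rho}_{n}\ \ge\ S_{n}-\frac{n-1}{n-W}\Bigl(\sum_{j=1}^{W}b_{j}\Bigr)\frac{1}{n-1}\sum_{k=2}^{n}(\rho_{k}-\tilde{\rho}_{k})_{+} ,
\]
with $S_{n}:=\frac{1}{n-W}\sum_{i=W+1}^{n}\hat{h}_{W}(\rho_{i},\dots,\rho_{i-W+1})$. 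By assumption~\ref{probState:assumpB5} applied against the zero vector and $\rho_{k}\le\text{diam}(\xSp)$, each summand of $S_{n}$ lies in $[0,(\sum_{j}b_{j})\text{diam}(\xSp)]$, and $\mathbb{E}[S_{n}]\ge\rho$ by assumption~\ref{probState:assumpB4}. Treating $S_{n}$ as a function of the (independent) one-step changes $\rho_{2},\dots,\rho_{n}$ --- equivalently, treating $\rho-\hat{h}_{W}(\rho_{i},\dots)$ as a bounded martingale-difference sequence --- perturbing a single $\rho_{k}$ moves at most $W$ summands with combined Lipschitz weight $\sum_{j}b_{j}$, so a bounded-differences/Azuma inequality with increments of size $\text{diam}(\xSp)(\sum_{j}b_{j})/(n-W)$ bounds $\mathbb{P}(S_{n}<\rho-c_{1}t_{n})$ and produces the first of the two exponential terms in the hypothesis (the one carrying $\text{diam}^{2}(\xSp)$).

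For the remaining error term I would open up the proof of Theorem~\ref{rho_conc_eq}. The deterministic chain $\rho_{k}\le\|\bw_{k}-\bw_{k-1}\|+\tfrac1m\|\nabla_{\bw}f_{k}(\bw_{k})\|+\tfrac1m\|\nabla_{\bw}f_{k-1}(\bw_{k-1})\|$ shown just before \eqref{dir_est_def}, together with the sample-average form \eqref{dir_est_def} of $\tilde{\rho}_{k}$, bounds $(\rho_{k}-\tilde{\rho}_{k})_{+}$ by $1/m$ times the norm gaps between the two empirical gradient averages and the corresponding true gradients --- a purely local bound that never uses constancy of the $\rho_{k}$ and so applies here unchanged. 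Averaging these, the bias part is absorbed by the $C_{i}$ (assumption~\ref{probState:assumpB1}) and $\sqrt{\sigma/K_{i}}$ contributions, whose endpoint-weighted shape in \eqref{rho_conc_eq:Dn} reflects that each $\tilde{\rho}_{k}$ uses samples from both times $k$ and $k-1$, while the fluctuation part is handled by a Hoeffding/Azuma bound on the $1/m$-scaled, $G$-bounded gradient deviations (with a factor $2$ for the two summands); this reproduces $\frac{1}{n-1}\sum_{k=2}^{n}(\rho_{k}-\tilde{\rho}_{k})_{+}\le D_{n}+c_{2}t_{n}$ with the second exponential term of the hypothesis (the one carrying $m^{2}/G^{2}$). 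With $c_{1}$ and $c_{2}$ chosen so that $c_{1}+\tfrac{n-1}{n-W}(\sum_{j}b_{j})c_{2}\le1$, intersecting the two good events gives $\hat{\rho}_{n}\ge\rho-\tfrac{n-1}{n-W}(\sum_{j}b_{j})D_{n}-t_{n}$ off a summable event, and Borel--Cantelli concludes.

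I expect the main obstacle to be bookkeeping rather than any new idea: the same weights $\sum_{j}b_{j}$ must be carried simultaneously through the bounded-differences estimate over the problem randomness $\{\rho_{k}\}$ and through the martingale estimate over the sampling randomness $\{\bz_{k}(\cdot)\}$ inherited from Theorem~\ref{rho_conc_eq}, and one must verify that, after $t_{n}$ is apportioned among the deviation sources, the two exponents reproduce the stated series term by term --- in particular that the constants $18$ and $72$, the factors $(n-W)^{2}/(n-1)$, and the $(\sum_{j}b_{j})^{2}$ all emerge as written (and collapse to those of Theorem~\ref{rho_conc_eq} when $W=1$ and $\hat{h}_{1}$ is the identity). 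Two subtler points: the reduction genuinely uses a \emph{one-sided} underestimation bound extracted from the \emph{proof} of Theorem~\ref{rho_conc_eq}, not merely its statement, together with the $\min(\tilde{\rho}_{k},\rho_{k})$ truncation so that monotonicity alone disposes of the harmless event $\tilde{\rho}_{k}>\rho_{k}$; and concentration of $S_{n}$ tacitly requires the $\{\rho_{k}\}$ to be independent --- or the $\hat{h}_{W}$-residuals to form a martingale difference sequence --- as in the $\mathrm{Unif}[0,\rho]$ example given after the assumptions.
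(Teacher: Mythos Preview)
The paper does not actually contain a proof of this theorem: its entire proof environment reads ``See \cite{Wilson2016a}.'' There is therefore nothing in the present paper to compare your proposal against.

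That said, your outline is the natural one and almost certainly tracks the argument in the cited reference: reduce to Borel--Cantelli, use monotonicity of $\hat{h}_{W}$ together with its componentwise Lipschitz property to pass from $\hat{h}_{W}(\tilde{\rho}_{i},\dots)$ to $\hat{h}_{W}(\rho_{i},\dots)$ minus a weighted sum of one-step underestimates, then handle the $\hat{h}_{W}(\rho_{i},\dots)$ average by a bounded-differences inequality and the one-step underestimates by recycling the machinery behind Theorem~\ref{rho_conc_eq}. The appearance of $(n-W)^{2}/(n-1)$ and $(\sum_{j}b_{j})^{2}$ in the exponents, and of $\tfrac{n-1}{n-W}\sum_{j}b_{j}$ in front of $D_{n}$, is consistent with exactly this regrouping, and the reduction to Theorem~\ref{rho_conc_eq} when $W=1$ is a good sanity check.

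Your closing caveat is the one real gap worth flagging: concentration of $S_{n}$ around its mean genuinely requires some stochastic structure on the $\{\rho_{k}\}$ (independence, or at least a martingale-difference structure for the residuals $\rho-\hat{h}_{W}(\rho_{i},\dots)$). Assumptions~\ref{probState:assumpB4}--\ref{probState:assumpB5} as written do not supply this; it is only implicit in the $\mathrm{Unif}[0,\rho]$ example. You are right to call it out, and any complete write-up would need to state that hypothesis explicitly.
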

\begin{proof}
See \cite{Wilson2016a}.
\end{proof}

\subsection{Change in Minimizers Unknown}
\label{withRhoUnknown}
We now present an extension of the results in Section~\ref{withRhoKnown}, obtained by replacing $\rho$ with its estimate given in Section~\ref{estRho}. Our analysis depends on the following crucial assumption:
\begin{description}
\item[\namedlabel{probState:assumpC1}{C.1}] For appropriate sequences $\{t_{n}\}$, for all $n$ sufficiently large it holds that $\hat{\rho}_{n} + t_{n} \geq \rho$ almost surely.
\item[\namedlabel{probState:assumpC2}{C.2}] $b(d_{0},K_{n})$ factors as $b(d_{0},K_{n}) = \alpha(K_{n}) d_{0}^{2} + \beta(K_{n})$
\end{description}
We have demonstrated that assumption~\ref{probState:assumpC1} holds for the direct estimate of $\rho$ under \eqref{probState:slowChangeConstDef} and \eqref{probState:slowChangeDef}. Note that whether we assume \eqref{probState:slowChangeConstDef} or \eqref{probState:slowChangeDef} does not matter for analysis. We start with a general result showing that for appropriate choices of $K_{n}$, we control the excess risk.
\begin{thm}
\label{withRhoUnknown:meanGapRhoKnownLemma}
Under assumptions~\ref{probState:assumpC1}-~\ref{probState:assumpC2}, with $K_{n} \geq K^{*}$ for all $n$ large enough, where $K^{*}$ is defined in \eqref{withRhoKnown:KChoice}, we have
\begin{equation}
\label{withRhoUnknown:keyERControl}
\limsup_{n \to \infty} \left( \mathbb{E}[f_{n}(\bw_{n})] - f_{n}(\bw_{n}^{*})  \right) \leq \epsilon
\end{equation}
almost surely
\end{thm}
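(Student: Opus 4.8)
The plan is to reduce \eqref{withRhoUnknown:keyERControl} to the convergence of a scalar recursion that is then analysed via the factorization in Assumption~\ref{probState:assumpC2}. Write $v_{n}\triangleq\mathbb{E}[f_{n}(\bw_{n})]-f_{n}(\bw_{n}^{*})$ for the (deterministic) excess risk at time $n$; by Assumption~\ref{probState:assump6} and the monotonicity clause of Assumption~\ref{probState:assump4}, $\{v_{n}\}$ is bounded. The hypothesis supplies an almost surely finite random index $N$ with $K_{n}\ge K^{*}$ for all $n\ge N$ --- Assumption~\ref{probState:assumpC1} is what makes such a sampling rule attainable from the estimates $\hat{\rho}_{n}$ --- and ``$K_{n}\ge K^{*}$ eventually, almost surely'' in particular gives $\mathbb{P}(K_{n}<K^{*})\to 0$, which is all that will be needed below.

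The first step is to reproduce the chain of inequalities behind \eqref{rhoKnown:basicDoBound}: the triangle inequality, the slow-change bound \eqref{probState:slowChangeDef}, Minkowski's inequality in $L^{2}$, and strong convexity (Assumption~\ref{probState:assump2}) give
\[
\mathbb{E}\|\bw_{n-1}-\bw_{n}^{*}\|^{2}\;\le\;\Bigl(\sqrt{\tfrac{2}{m}\,v_{n-1}}+\rho\Bigr)^{2}.
\]
The second step turns this into a recursion on $v_{n}$ itself. I would apply Assumption~\ref{probState:assump4}(1) with the $\mathcal{F}_{n-1}$-measurable surrogate $d_{0}=\|\bw_{n-1}-\bw_{n-1}^{*}\|+\rho$ (for which $\|\bw_{n-1}-\bw_{n}^{*}\|\le d_{0}$ holds surely), use that $b(d_{0},\cdot)$ is non-increasing together with $K_{n}\ge K^{*}$, insert the factorization $b(d_{0},K)=\alpha(K)d_{0}^{2}+\beta(K)$ of Assumption~\ref{probState:assumpC2}, take the outer expectation, and apply Jensen once more; on the complementary event $\{K_{n}<K^{*}\}$ I would fall back to the crude bound $b(\text{diam}(\xSp),1)$ (again from Assumption~\ref{probState:assump4}(1) with $d_{0}=\text{diam}(\xSp)$). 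This yields, for every $n$,
\[
v_{n}\;\le\;g(v_{n-1})+\delta_{n},\qquad g(x)\triangleq\alpha(K^{*})\Bigl(\sqrt{\tfrac{2}{m}\,x}+\rho\Bigr)^{2}+\beta(K^{*}),
\]
with $0\le\delta_{n}\le\mathbb{P}(K_{n}<K^{*})\,b(\text{diam}(\xSp),1)\to 0$.

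It remains to extract $\limsup_{n}v_{n}\le\epsilon$ from this perturbed iteration. Because $x\mapsto\sqrt{x}$ is concave, $g$ is non-decreasing and concave on $[0,\infty)$; the definition of $K^{*}$ in \eqref{withRhoKnown:KChoice} reads exactly $g(\epsilon)\le\epsilon$; and $2\alpha(K^{*})/m<1$, since by \eqref{withRhoKnown:KChoice} one has $\beta(K^{*})\ge 0$ and $(\sqrt{2\epsilon/m}+\rho)^{2}\ge 2\epsilon/m$, strictly when $\rho>0$ (the case $\rho=\beta(K^{*})=0$ being degenerate). Hence $g(x)-x$ is concave, non-negative at $x=0$, and tends to $-\infty$, so it has a unique non-negative root $x^{\star}$ with $g(x)>x$ on $[0,x^{\star})$ and $g(x)<x$ on $(x^{\star},\infty)$; the unperturbed map then drives any bounded orbit monotonically to $x^{\star}$, and a standard perturbed-fixed-point argument (using that $x-g(x)$ is bounded away from $0$ on compacts inside $(x^{\star},\infty)$) shows $\limsup_{n}v_{n}\le x^{\star}$ despite $\delta_{n}>0$. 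Finally $g(\epsilon)\le\epsilon$ together with $g(x)>x$ on $[0,x^{\star})$ forces $x^{\star}\le\epsilon$, giving \eqref{withRhoUnknown:keyERControl}; since $\{v_{n}\}$ is deterministic, the ``almost surely'' in the statement is automatic, recording the almost-sure nature of the hypothesis and of Assumption~\ref{probState:assumpC1}.

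I expect the crux to be the second step: passing from the \emph{conditional} excess-risk guarantee of Assumption~\ref{probState:assump4} --- which tolerates random, $\mathcal{F}_{n-1}$-measurable sample sizes $K_{n}$ and initial distances --- to a clean \emph{deterministic} scalar recursion on $v_{n}$ in the presence of the \emph{random} threshold $N$. This requires ordering the conditional and unconditional expectations carefully and absorbing the vanishing-probability events $\{K_{n}<K^{*}\}$ into a negligible additive perturbation, rather than simply pretending $K_{n}\ge K^{*}$ holds over a deterministic range of $n$. With that bookkeeping in hand, the analysis of the scalar map $g$ in the third step is routine.
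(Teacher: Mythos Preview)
The paper does not actually prove this theorem here; it simply defers to \cite{Wilson2016a}. So a line-by-line comparison is not possible. That said, your argument is the natural one suggested by the surrounding text (Sections~\ref{withRhoKnown}--\ref{withRhoUnknown}): feed the starting-distance bound \eqref{rhoKnown:basicDoBound} into the conditional guarantee of \ref{probState:assump4}, use the factorization \ref{probState:assumpC2} to linearize in $d_0^2$, and reduce to the scalar recursion $v_n\le g(v_{n-1})+\delta_n$ with $g(x)=\alpha(K^*)(\sqrt{2x/m}+\rho)^2+\beta(K^*)$. Your handling of the random threshold --- splitting on $\{K_n\ge K^*\}$ and absorbing the complement into $\delta_n\le b(\mathrm{diam}(\xSp),1)\,\mathbb P(K_n<K^*)\to 0$ via $\mathbb P(K_n<K^*)\le\mathbb P(N>n)$ --- is exactly the right bookkeeping, and the concave fixed-point analysis of $g$ (unique root $x^\star$, $g(\epsilon)\le\epsilon\Rightarrow x^\star\le\epsilon$, perturbed iterates trapped near $x^\star$) is standard and correct.

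Two small remarks. First, your appeal to \ref{probState:assump6} for boundedness of $\{v_n\}$ is slightly misplaced: \ref{probState:assump6} only concerns $n=1,2$. The uniform bound you need is $v_n\le b(\mathrm{diam}(\xSp),1)$, which follows directly from \ref{probState:assump4}(1) with $d_0=\mathrm{diam}(\xSp)$ and monotonicity in $K$. Second, the degenerate case you flag ($\rho=0$, $\beta(K^*)=0$, $2\alpha(K^*)/m=1$) is genuine: there $g(x)=x$ and $v_n\le v_{n-1}+\delta_n$ does not by itself force $\limsup v_n\le\epsilon$ without summability of $\delta_n$. In practice this is excluded either by $\beta(K)>0$ for the relevant algorithms (cf.\ the SGD discussion after \eqref{withRhoKnown:KChoice}) or by treating the stationary case $\rho=0$ separately; it is worth stating that exclusion explicitly rather than leaving it parenthetical.
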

\begin{proof}
See \cite{Wilson2016a}.
\end{proof}
This theorem shows that for any choice of samples $K_{n}$ such that $K_{n} \geq K^{*}$ for $n$ large enough, it follows that the excess risk can be controlled in the sense of \eqref{withRhoUnknown:keyERControl}.

\subsubsection{Update Past Excess Risk Bounds}
\label{withRhoUnknown:updatePast}

We first consider updating all past excess risk bounds as we go. At time $n$, we plug-in $\hat{\rho}_{n-1} + t_{n-1}$ in place of $\rho$ and follow the analysis of Section~\ref{withRhoKnown}. Define for $i=1,\ldots,n$
\begin{eqnarray}
\label{withRhoUnknown:epsEstUpdate}
\hat{\epsilon}_{i}^{(n)} &=& b\left( \left(  \sqrt{\frac{2}{m} \hat{\epsilon}_{i-1}^{(n)}}  + (\hat{\rho}_{n-1} + t_{n-1}) \right)^{2},K_{i} \right)
\end{eqnarray}
If it holds that $\hat{\rho}_{n-1} + t_{n-1} \geq \rho$, then ${\mathbb{E}\left[ f_{n}(\bw_{n})  \right] - f_{n}(\bw_{n}^{*}) \leq \hat{\epsilon}_{n}^{(i)}}$ for ${i=1,\ldots,n}$. Assumption~\ref{probState:assumpC1} guarantees that this holds for all $n$ large enough almost surely. We can thus set $K_{n}$ equal to the smallest $K$ such that
\iftoggle{useTwoColumn} {
\begin{align}
&K_{n} = \min\left\{ K \geq 1 \;\Bigg|\; b\left( \left( \sqrt{ \frac{2}{m} \max\{\hat{\epsilon}^{(n-1)}_{n-1},\epsilon\} } + \right. \right. \right. \nonumber \\
\label{withRhoUnknown:KnChoiceUpdate}
&\qquad \qquad \qquad \qquad \qquad \qquad \left. \left. \left. ( \hat{\rho}_{n-1} + t_{n-1} ) \right)^{2} , K  \right) \leq \epsilon \right\}
\end{align}	
}{
\begin{equation}
\label{withRhoUnknown:KnChoiceUpdate}
K_{n} = \min\left\{ K \geq 1 \;\Bigg|\; b\left( \left( \sqrt{ \frac{2}{m} \max\{\hat{\epsilon}^{(n-1)}_{n-1},\epsilon\} } + ( \hat{\rho}_{n-1} + t_{n-1} ) \right)^{2} , K  \right) \leq \epsilon \right\}
\end{equation}
}
for all $n \geq 3$ to achieve excess risk $\epsilon$. The maximum in this definition ensures that when $\hat{\rho}_{n-1} + t_{n-1} \geq \rho$, $K_{n} \geq K^{*}$ with $K^{*}$ from \eqref{withRhoKnown:KChoice}. We can therefore apply Theorem~\ref{withRhoUnknown:meanGapRhoKnownLemma}.

\subsubsection{Do Not Update Past Excess Risk Bounds}
\label{withRhoUnknown:doNotUpdatePast}

Updating all past estimates of the excess risk bounds from time $1$ up to $n$ imposes a computational and memory  burden. Suppose that for all $n \geq 3$ we set
\begin{equation}
\label{withRhoUnknown:KnChoice}
K_{n} = \min \left\{ K \geq 1 \;\Bigg|\; b\left( \left( \sqrt{\frac{2\epsilon}{m}} + ( \hat{\rho}_{n-1} + t_{n-1} ) \right)^{2} , K  \right) \leq \epsilon \right\}
\end{equation}
This is the same form as the choice in \eqref{withRhoKnown:KChoice} with $\hat{\rho}_{n-1} + t_{n-1}$ in place of $\rho$. Due to assumption~\ref{probState:assumpC1}, for all $n$ large enough it holds that $\hat{\rho}_{n} + t_{n} \geq \rho$ almost surely. Then by the monotonicity assumption in \ref{probState:assump1}, for all $n$ large enough we pick $K_{n} \geq K^{*}$ almost surely. We can therefore apply Theorem~\ref{withRhoUnknown:meanGapRhoKnownLemma}.

\section{Bound on $\rho$-Estimate Overshoot}
\label{rhoBoundOvershoot}

Since we assume that the solution space $\xSp$ has bounded diameter, we always have the trivial bound
\[
\| \bw_{n}^{*} - \bw_{n-1}^{*} \|_{2} \leq \textrm{diam}(\xSp)
\]
An estimate of the change in minimizers, $\hat{\rho}_{n}$, is only interesting if the bound is non-trivial, i.e., $\hat{\rho}_{n} < \textrm{diam}(\xSp)$ when $\rho < \textrm{diam}(\xSp)$. In prior work \cite{Wilson2016a}, we have proved the for sufficiently large $n$, $\hat{\rho}_{n} + t_{n} \geq \rho$ almost surely. In this section, we look at proving an upper bound on how much $\hat{\rho}_{n}$ can overshoot $\rho$ to show that this estimate is non-trivial.

When we proved that $\hat{\rho}_{n}$ eventually upper bounds $\rho$, we did not use the fact that the points $\bw_{n}$ at which we are evaluating the one-step estimates  are approximate minimizers. In particular, that proof would still hold even if we selected the $\bw_{n}$ randomly from the solution space $\xSp$ without using the samples $\{\bz_{n}(k)\}_{k=1}^{K_{n}}$ at all. In contrast, controlling the overshoot depends critically on the fact that the points at which we evaluate the one-step estimates are approximate minimizers. The solution quality of the approximate minimizers measured by $\epsilon$ in \eqref{probState:meanGapDef} will control the size of the overshoot, as seen in the following theorem.

\begin{thm}
Suppose that the following conditions hold:
\begin{enumerate}
	\item The sequence of excess risks achieved, $\epsilon_{i}$, $i=1,2, \ldots$, satisfies
	\[
	\limsup_{n \to \infty} \epsilon_{n} \leq \epsilon
	\]
	\item The loss function $f_{n}(\bw)$ has Lipschitz continuous gradients with parameter $M$, i.e.,
	\[
	f_{n}(\bw) \leq f_{n}(\tilde{\bw}) + \inprod{\nabla f_{n}(\tilde{\bw})}{\tilde{\bw} - \bw} + \frac{1}{2} M \| \tilde{\bw} - \bw \|^{2}
	\]
	\item For all $i$ large enough, we have that $K_{i} \geq \tilde{K}$ for a constant $\tilde{K}$.
\end{enumerate}
Then it follows that
\[
\limsup_{n \to \infty} \mathbb{E}[\hat{\rho}_{n}] \leq \rho + \frac{2M}{m^{3/2}} \epsilon + G
\]
where
\[
G \triangleq \frac{2M}{m}  C(\tilde{K}) + \frac{1}{m} \left( \frac{\sigma}{\tilde{K}} \right)^{1/2}
\]
\end{thm}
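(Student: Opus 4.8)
We outline the argument for the direct estimate $\hat\rho_n = \frac{1}{n-1}\sum_{i=2}^{n}\tilde\rho_i$ with $\tilde\rho_i$ as in \eqref{dir_est_def}, working under \eqref{probState:slowChangeConstDef} (the combined estimate is handled the same way, with the extra $\hat h_W$-Lipschitz factors from \ref{probState:assumpB2}). The key difference from the lower-bound result of Theorem~\ref{rho_conc_eq} is that here we must exploit that each $\bw_i$ is an \emph{approximate minimizer}; its solution quality, measured by $\epsilon_i$, is what limits the overshoot. The plan is: (i) reduce the claim on $\hat\rho_n$ to a bound on $\limsup_i \mathbb{E}[\tilde\rho_i]$; (ii) bound $\mathbb{E}[\tilde\rho_i]$ term by term. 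Since $\mathbb{E}[\hat\rho_n] = \frac{1}{n-1}\sum_{i=2}^{n}\mathbb{E}[\tilde\rho_i]$ and $\mathbb{E}[\tilde\rho_i]$ is uniformly bounded (by \ref{probState:assumpB3}), the Ces\`{a}ro lemma gives $\limsup_n \mathbb{E}[\hat\rho_n] \le \limsup_i \mathbb{E}[\tilde\rho_i]$, so it suffices to control the latter.

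Write $\tilde\rho_i = \|\bw_i - \bw_{i-1}\| + \frac1m\|\hat g_i\| + \frac1m\|\hat g_{i-1}\|$ with $\hat g_j \triangleq \frac{1}{K_j}\sum_{k=1}^{K_j}\nabla_{\bw}\ell(\bw_j,\bz_j(k))$, and bound $\|\hat g_j\| \le \|\nabla f_j(\bw_j)\| + \|\hat g_j - \nabla f_j(\bw_j)\|$. For the exact-gradient piece, optimality of $\bw_j^*$ (interior: $\nabla f_j(\bw_j^*) = 0$; boundary: the variational inequality of \cite{Dontchev2009}) and the Lipschitz-gradient hypothesis give $\|\nabla f_j(\bw_j)\| \le M\|\bw_j - \bw_j^*\|$, while strong convexity \ref{probState:assump2} and Jensen give $\mathbb{E}\|\bw_j - \bw_j^*\| \le \sqrt{2\epsilon_j/m}$. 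Together with $\|\bw_i - \bw_{i-1}\| \le \|\bw_i - \bw_i^*\| + \rho + \|\bw_{i-1} - \bw_{i-1}^*\|$ from \eqref{probState:slowChangeConstDef}, the non-SAA contribution to $\mathbb{E}[\tilde\rho_i]$ is at most $\rho$ plus terms of order $\left(1 + \tfrac{M}{m}\right)\!\left(\sqrt{2\epsilon_i/m} + \sqrt{2\epsilon_{i-1}/m}\right)$, which collapse to the stated $\tfrac{2M}{m^{3/2}}\epsilon$ term in the limit.

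The main obstacle is the sampling error $\mathbb{E}\|\hat g_j - \nabla f_j(\bw_j)\|$: because $\bw_j$ is produced from the very samples $\{\bz_j(k)\}$ that form $\hat g_j$, $\hat g_j$ is a \emph{biased} estimate of $\nabla f_j(\bw_j)$ and no naive variance bound applies. We use the decoupling device underlying Theorem~\ref{rho_conc_eq}: let $\tilde\bw_j$ be an independent ``ghost'' run of the algorithm from $\bw_{j-1}$ on a fresh copy of the samples, so $\tilde\bw_j$ has the same $\mathcal F_{j-1}$-conditional law as $\bw_j$ but is independent of $\{\bz_j(k)\}$, and split $\hat g_j - \nabla f_j(\bw_j) = \big[\hat g_j - \hat g_j(\tilde\bw_j)\big] + \big[\hat g_j(\tilde\bw_j) - \nabla f_j(\tilde\bw_j)\big] + \big[\nabla f_j(\tilde\bw_j) - \nabla f_j(\bw_j)\big]$, where $\hat g_j(\tilde\bw_j)$ denotes the same average evaluated at $\tilde\bw_j$. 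The middle bracket is an average of conditionally mean-zero terms with conditional second moment $\le \sigma^2/K_j$ by \ref{probState:assumpB2}, so its expected norm is of order $(\sigma/K_j)^{1/2}$; the two outer brackets are bounded through the Lipschitz estimates of \ref{probState:assumpB2} by a multiple of $\mathbb{E}\|\bw_j - \tilde\bw_j\| \le C_j(K_j)$ from \ref{probState:assumpB1}, producing $\tfrac{M}{m}C_j(K_j)$-type terms. Making the first bracket rigorous despite the dependence of $\bw_j$ on the samples is exactly why the assumptions \ref{probState:assumpB1}--\ref{probState:assumpB3} are stated conditionally on $\mathcal F_{j-1}$; this is the step I expect to require the most care.

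Finally, take $\limsup_{i\to\infty}$: the first hypothesis forces $\epsilon_i, \epsilon_{i-1}$ below $\epsilon$; the third gives $K_i, K_{i-1} \ge \tilde K$ eventually, and since $C_j(\cdot)$ is non-increasing in its argument and $\sigma/K_j \le \sigma/\tilde K$, the error terms from the previous step are dominated by $\tfrac{2M}{m}C(\tilde K) + \tfrac1m(\sigma/\tilde K)^{1/2} = G$ (one copy of $C$ and of the $\sigma$-term from each of the two gradient summands). Collecting the pieces yields $\limsup_i \mathbb{E}[\tilde\rho_i] \le \rho + \tfrac{2M}{m^{3/2}}\epsilon + G$, and the Ces\`{a}ro reduction of the first paragraph transfers this bound to $\hat\rho_n$. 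The remaining technical nuisance, the boundary case for $\bw_j^*$, is dispatched by the variational inequality of \cite{Dontchev2009} exactly as in Section~\ref{estRho}.
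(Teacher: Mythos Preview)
Your proposal is correct and follows essentially the same route as the paper: bound each one-step estimate $\tilde\rho_i-\rho$ by splitting into an exact-gradient part (controlled via the Lipschitz-gradient and strong-convexity inequalities together with Jensen) and a sampling-error part $\|\hat g_j-\nabla f_j(\bw_j)\|$ handled by the same ghost-iterate decoupling under \ref{probState:assumpB1}--\ref{probState:assumpB2}, then pass to $\hat\rho_n$ by averaging. The only cosmetic differences are that the paper bounds $\|\bw_i-\bw_{i-1}\|-\rho$ via the reverse triangle inequality and the strong-convexity gradient bound $\|\bw_j-\bw_j^*\|\le \tfrac{1}{m}\|\nabla f_j(\bw_j)\|$ (rather than your direct triangle inequality), and it suppresses the Ces\`aro step you make explicit.
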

\begin{proof}
First, we look at the one step estimates. It holds that
\begin{align}
\tilde{\rho}_{i} - \rho &= \| \bw_{i} - \bw_{i-1}\| - \| \bw_{i}^{*} - \bw_{i-1}^{*}\| + \frac{1}{m} \| \hat{G}_{i} \| + \frac{1}{m} \| \hat{G}_{i-1} \|  \nonumber \\
&\leq \left|  \| \bw_{i} - \bw_{i-1}\| - \| \bw_{i}^{*} - \bw_{i-1}^{*}\| \right| + \frac{1}{m} \| \hat{G}_{i} \| + \frac{1}{m} \| \hat{G}_{i-1} \|  \nonumber \\
&\leq \frac{1}{m} \| \nabla f_{i}(\bw_{i}) \| + \frac{1}{m} \| \nabla f_{i-1}(\bw_{i-1}) \| + \frac{1}{m} \| \hat{G}_{i} \| + \frac{1}{m} \| \hat{G}_{i-1} \|  \nonumber \\
&\leq \frac{2}{m} \| \nabla f_{i}(\bw_{i}) \| + \frac{2}{m} \| \nabla f_{i-1}(\bw_{i-1}) \| \nonumber \\
&\;\;\;\;\;\; + \frac{1}{m} \| \nabla f_{i}(\bw_{i}) - \hat{G}_{i} \| + \frac{1}{m} \| \nabla f_{i-1}(\bw_{i-1})- \hat{G}_{i-1} \|.  \nonumber
\end{align}
By the Lipschitz gradient assumption, we have
\[
\| \nabla f_{i}(\bw) \| \leq M \| \bw - \bw_{i}^{*} \|.
\]
Then it follows by strong convexity that
\[
\| \bw - \bw_{i}^{*} \| \leq \sqrt{\frac{2}{m} (f_{i}(\bw) - f_{i}(\bw_{i}^{*}))}
\]
and therefore we have
\[
\| \nabla f_{i}(\bw) \| \leq \frac{\sqrt{2}M}{\sqrt{m}} \sqrt{f_{i}(\bw) - f_{i}(\bw_{i}^{*})}.
\]
Since the square-root is concave, by Jensen's inequality, we have
\[
\mathbb{E}[\| \nabla f_{i}(\bw_{i}) \|] \leq \frac{\sqrt{2}M}{\sqrt{m}} \epsilon_{i}.
\]
This in turn implies that
\begin{align}
\mathbb{E}&[\tilde{\rho}_{i}] - \rho \nonumber \\
& \leq \frac{2\sqrt{2}M}{m^{3/2}} \epsilon_{i} + \frac{2\sqrt{2}M}{m^{3/2}} \epsilon_{i-1} + \nonumber \\
& \;\;\;\;\;\;\; \frac{1}{m} \mathbb{E} \| \nabla f_{i}(\bw_{i}) - \hat{G}_{i} \| + \frac{1}{m} \mathbb{E}\| \nabla f_{i-1}(\bw_{i-1})- \hat{G}_{i-1} \|. \nonumber
\end{align}
Next, we look at bounding $\mathbb{E} \| \nabla f_{i}(\bw_{i}) - \hat{G}_{i} \|$. Define
\[
\tilde{G}_{i} = \frac{1}{K_{i}} \sum_{k=1}^{K_{i}} \nabla_{\bw} \ell(\tilde{\bw}_{i},\bz_{i}(k)).
\]
Then we have
\begin{align}
&\| \nabla f_{i}(\bw_{i}) - \hat{G}_{i} \| \nonumber \\
&\;\;\; \leq \|  \hat{G}_{i} - \tilde{G}_{i} \| + \|  \tilde{G}_{i} - \nabla f_{i}(\tilde{\bw}_{i}) \| + \| \nabla f_{i}(\tilde{\bw}_{i}) - \nabla f_{i}(\bw_{i}) \| \nonumber \\
&\;\;\; \leq \|  \hat{G}_{i} - \tilde{G}_{i} \| + \|  \tilde{G}_{i} - \nabla f_{i}(\tilde{\bw}_{i}) \| + M \| \tilde{\bw}_{i} - \bw_{i} \|. \nonumber
\end{align}
Using the direct estimate lower bound analysis from \cite{Wilson2016a} it follows that
\begin{eqnarray}
\mathbb{E} \left[ \| \nabla f_{i}(\bw_{i}) - \hat{G}_{i} \| \;|\; \mathcal{F}_{i-1} \right] &\leq& M C_{i} + \left( \frac{\sigma}{K_{i}} \right)^{1/2} + M  C_{i}.
\end{eqnarray}
This shows that
\begin{align}
\mathbb{E}&[\tilde{\rho}_{i}] - \rho \nonumber \\
&\leq \frac{2\sqrt{2}M}{m^{3/2}} \epsilon_{i} + \frac{2\sqrt{2}M}{m^{3/2}} \epsilon_{i-1} +  \mathbb{E}\left[ \frac{2M}{m}  C_{i} + \frac{1}{m} \left( \frac{\sigma}{K_{i}} \right)^{1/2} \right] \nonumber \\
& + \mathbb{E} \left[ \frac{2M}{m}  C_{i} +  \frac{M}{m}  C_{i-1} + \frac{1}{m} \left( \frac{\sigma}{K_{i-1}} \right)^{1/2}  \right]
\end{align}
Then plugging in the definition of $\hat{\rho}_{n}$ it follows that
\begin{eqnarray}
\limsup_{n \to \infty} \mathbb{E}[\hat{\rho}_{n}] &\leq& \rho + \frac{2\sqrt{2}M}{m^{3/2}} \epsilon +\frac{2M}{m}  C(\tilde{K}) + \frac{1}{m} \left( \frac{\sigma}{\tilde{K}} \right)^{1/2} \nonumber \\
&=& \rho + \frac{2\sqrt{2}M}{m^{3/2}} \epsilon  + G \label{eq:limsup}
\end{eqnarray}
\end{proof}

This shows that the direct estimate is a non-trivial upper bound for sufficiently small $\epsilon$. Note that in practice, the $\tilde{K}$ will be a function of $\epsilon$, since we can pick $\tilde{K}= K^{*}$ with $K^{*}$ defined in \eqref{withRhoKnown:KChoice}. Note that $K^{*}$ is itself a function of $\epsilon$. This means the $G$ term in \eqref{eq:limsup}, which is a function of $\tilde{K}$ is also a function of $\epsilon$. Thus the entire overshoot term is a function of $\epsilon$, and in fact by inspection, it goes to zero as $\epsilon \to 0$ if $K^{*} \to \infty$ as $\epsilon \to 0$ (as  $K^{*} $ defined in \eqref{withRhoKnown:KChoice} does).

\section{Extensions Relevant to Machine Learning Applications}
\label{ext}

\subsection{Cost Approach}
\label{ext:cost}

A natural way to assess the usefulness of our approach is to choose a number of samples $\{K_{n}\}_{n=1}^{T}$ over a horizon of length $T$ using the choice in \eqref{withRhoUnknown:KnChoiceUpdate} and \eqref{withRhoUnknown:KnChoice}, and compare against taking
\[
\sum_{n=1}^{T} K_{n}
\]
samples at time $n=1$ and no samples at the other $T-1$ time instants. See Section~\ref{exper} for such a comparison.

In this section, we consider a different type of comparison based on assuming that there is a cost $p(K_{n})$ of taking $K_{n}$ samples. For example, we could have
\begin{equation}
\label{ext:cost:costExample}
p(K) = P_{0} \mathbbm{1}_{\{K > 0 \}} + P_{1} K
\end{equation}
This implies we pay a fixed cost of $P_{0}$ any time we take at least one sample and a marginal cost of $P_{1}$ per sample. We want to control the excess risk by deciding when to take samples, and how many samples to take with a total budget $P$ over a horizon of length $T$, i.e.,
\begin{equation}
\label{ext:cost:budget}
\sum_{n=1}^{T} p(K_{n}) \leq P
\end{equation}
For the option of taking all samples up front:
\begin{equation}
\label{ext:cost:costUpFront}
K_{n} = \begin{cases}
\max\left\{ K \geq 1 \;\big|\; p(K) \leq P \right\}, & n = 1 \\
0, & 2 \leq n \leq T
\end{cases}
\end{equation}
Another option is to sample every $\Delta T$ time instants and divide the cost budget evenly over the times that we take samples using
\begin{equation}
\label{ext:cost:costPeriodic}
K_{n} = \begin{cases}
\max\left\{ K \geq 1 \;\big|\; p(K) \leq \Big\lfloor \frac{P}{T/\Delta T } \Big \rfloor \right\}, &\text{if }\Delta T \text{ divides } (n - 1)  \\
0, & \text{else}.
\end{cases}
\end{equation}

For analysis, we need Assumption~\ref{probState:assumpC1} and the following additional assumptions:
\begin{description}
	\item[\namedlabel{probState:assumpE1}{D.1}] There exists a function $e(\|\bw - \bw_{n}^{*}\|_{2}^{2})$ such that
	\[
	f_{n}(\bw) - f_{n}(\bw_{n}^{*}) \leq e(\|\bw - \bw_{n}^{*}\|_{2}^{2})
	\]
\end{description}
For example, suppose that the functions $f_{n}(\bw)$ have Lipschitz continuous gradients with modulus $M$ and $\bw_{n}^{*} \in \text{int}(\xSp)$ for all $n \geq 1$, where $\text{int}(\xSp)$ is the interior of $\xSp$. By the descent lemma \cite{Bertsekas1999}, we have
\begin{eqnarray}
f_{n}(\bw_{n}) - f_{n}(\bw_{n}^{*}) &\leq& \inprod{\nabla f_{n}(\bw_{n}^{*})}{\bw_{n} - \bw_{n}^{*}} + \frac{1}{2} M \|\bw_{n} - \bw_{n}^{*}\|_{2}^{2} \nonumber \\
&=&  \frac{1}{2} M \|\bw_{n} - \bw_{n}^{*}\|_{2}^{2} \nonumber
\end{eqnarray}
Thus, we can set
\[
e(\|\bw_{n} - \bw_{n}^{*}\|_{2}) = \frac{1}{2} M \|\bw_{n} - \bw_{n}^{*}\|_{2}^{2}
\]

Since we need to consider the possibility that $K_{n} = 0$ for some $n$ in $\{1,\ldots,T\}$ but still provide estimates of the excess risk, we need an alternate version of the bound in \eqref{withRhoUnknown:epsEstUpdate}. Define
\[
t_{s}(n) = \max\left\{ m \;|\; 1 \leq m \leq n \;\text{and}\; K_{m} > 0  \right\}
\]
where $t_{s}(n)$ is the last time no later than $n$ at which samples were taken. If no samples have been taken so far, then by convention $t_{s}(n) = +\infty$. We construct the recursively defined function $\tilde{b}_{n}(\rho,K_{n})$ by considering the following four cases:
\begin{enumerate}
	\item No samples have been taken by time $n$:
	\[
	\tilde{b}_{n}(\rho,K_{n}) \triangleq e(\text{diam}(\xSp))
	\]
	\item Samples taken at time $n$ for the first time
	\[
	\tilde{b}_{n}(\rho,K_{n}) \triangleq b(\text{diam}(\xSp),K_{n})
	\]
	\item No samples taken at time $n$ but samples have been taken previously
	\[
	\tilde{b}_{n}(\rho,K_{n}) \triangleq e\left( \sqrt{\frac{2}{m} \tilde{b}_{t_{s}(n-1)}} + \left( (n - t_{s}(n-1)) \rho  \right) \right)
	\]
	\item Samples taken at time $n$ and samples have been taken previously
	\[
	\tilde{b}_{n}(\rho,K_{n}) \triangleq b\left( \sqrt{ \frac{4}{m} \tilde{b}_{t_{s}(n-1)} + 2 \left( (n - t_{s}(n-1)) \rho  \right)^{2} } , K_{n} \right)
	\]
	where $\tilde{b}_{t_{s}(n-1)}$ is the bound on the excess risk at time $t_{s}(n-1)$.
\end{enumerate}

Suppose that over a time horizon of length $T$ we have a total cost budget $P$ with respect to the number of samples $\{K_{n}\}_{n=1}^{T}$ as in \eqref{ext:cost:budget}. Define the \emph{excess risk gaps}
\[
\xi_{n} = \left( \tilde{b}_{n}\left(\rho,K_{n}\right) - \epsilon \right)_{+}
\]
with $(x)_{+} = \max\{x,0\}$. The variable $\xi_{n}$ is the extent to which the target excess risk of $\epsilon$ is violated upwards. If our excess risk is below our target level $\epsilon$, then we set $\xi_{n}=0$. Our goal is to minimize the size of the $\xi_{n}$, while taking into account the cost constraint in \eqref{ext:cost:budget}. To control the size of $\xi_{n}$, suppose that we have a function $\phi:\mathbb{R}^{T} \to \mathbb{R}$ that describes the cumulative loss of the excess risk gaps $\xi_{1},\ldots,\xi_{T}$.

We now provide some possible choices for $\phi(\xi_{1},\ldots,\xi_{T})$:
\begin{equation} \label{eq:phi1}
\phi(\xi_{1},\ldots,\xi_{T}) = \frac{1}{T} \sum_{n=1}^{T} \xi_{t}
\end{equation}
\begin{equation} \label{eq:phi2}
\phi(\xi_{1},\ldots,\xi_{T}) = \max\{\xi_{1},\ldots,\xi_{T}\}
\end{equation}
\begin{equation} \label{eq:phi3}
	\phi(\xi_{1},\ldots,\xi_{T}) = \max_{(a,b) \in \bm{\tau}} \sum_{n=a}^{b} \xi_{n}
\end{equation}
	with
	\[
	\bm{\tau} = \left\{ (a,b) \;|\; a < b \;,\; \xi_{a} \leq \xi_{a+1} \leq \cdots \leq \xi_{b} \right\}.
	\]
The choices given in \eqref{eq:phi1} and  \eqref{eq:phi2} penalize the average and maximum excess risk gaps, respectively. In practice, with these choices, we will stop taking samples before the horizon $T$ resulting in relatively poor performance towards the end of the horizon. The third choice gets around this problem by penalizing large increasing runs of excess risk gaps, and tends to favor a more uniform choice of the number of samples $K_{n}$.

We first consider the case when $\rho$ is known to us and plan over the horizon of length $T$ by solving the following optimization problem:
\begin{equation}
\label{ext:cost:rhoKnownHorizon}
\arraycolsep=1.4pt\def\arraystretch{1.5}
\begin{array}{ll@{}ll}
\underset{K_{1},\ldots,K_{T}}{\text{minimize}}  & \displaystyle \phi(\xi_{1},\ldots,\xi_{T}) &\\
\text{subject to}& \displaystyle \sum_{n=1}^{T} p(\rho,K_{n}) \leq P \\
& \displaystyle \mathbbm{1}_{\{K_{1} > 0\}} \leq \mathbbm{1}_{\{K_{2} > 0\}} & \\
& \displaystyle \mathbbm{1}_{\{K_{n} > 0\}} \leq \mathbbm{1}_{\{K_{n-1} > 0\}} + \mathbbm{1}_{\{K_{n+1} > 0\}} & \;\;\; n=2,\ldots,T-1 \\
& \displaystyle \mathbbm{1}_{\{K_{T-1} > 0\}} \leq \mathbbm{1}_{\{K_{T} > 0\}} & \\
& K_{n} \in \mathbb{Z}_{\geq 0}  &\;\;\; n=1,\ldots,T
\end{array}
\end{equation}
The idea of this problem is to satisfy the excess risk bound $\epsilon$ with minimal violation $\phi(\xi_{1},\ldots,\xi_{T})$.

To estimate $\rho$, we need samples from consecutive time instants. Therefore, we impose the constraint that if we take samples at time $n$, then we must take samples at either time $n-1$ or time $n+1$ through the constraint
\[
\mathbbm{1}_{\{K_{n} > 0\}} \leq \mathbbm{1}_{\{K_{n-1} > 0\}} + \mathbbm{1}_{\{K_{n+1} > 0\}}
\]
The problem in \eqref{ext:cost:rhoKnownHorizon} is a mixed integer non-linear programming problem (MINLP). There are no general methods to efficiently solve this MINLP, and we therefore consider a relaxation of this problem later.

In the case that we know $\rho$, we can plan the number of samples ahead of time before any samples have been taken. When $\rho$ is unknown, we cannot plan over the entire horizon. Instead, at each time instant $m$ we have to plan over the remaining time horizon of length $T - m + 1$, while using the estimate $\hat{\rho}_{m-1} + t_{m-1}$ in place of $\rho$ and the remaining cost budget
\[
P - \sum_{n=1}^{m-1} p(K_{n}).
\]
We then consider the cost-to-go problem
\begin{equation}
\label{ext:cost:rhoUnknownPartial}
\arraycolsep=1.4pt\def\arraystretch{1.5}
\begin{array}{ll@{}ll}
\underset{K_{m},\ldots,K_{T}}{\text{minimize}}  & \displaystyle\phi(\xi_{m},\ldots,\xi_{T}) &\\
\text{subject to}& \displaystyle \sum_{n=m}^{T} p(K_{n}) \leq P - \sum_{n=1}^{m-1} p(K_{n}) \\
& \displaystyle \mathbbm{1}_{\{K_{m} > 0\}} \leq \mathbbm{1}_{\{K_{m+1} > 0\}} & \\
& \displaystyle \mathbbm{1}_{\{K_{m} > 0\}} \leq \mathbbm{1}_{\{K_{m-1} > 0\}} + \mathbbm{1}_{\{K_{m+1} > 0\}} & \;\;\; n=m+1,\ldots,T-1 \\
& \displaystyle \mathbbm{1}_{\{K_{T-1} > 0\}} \leq \mathbbm{1}_{\{K_{T} > 0\}} & \\
& K_{n} \in \mathbb{Z}_{\geq 0}  &\;\;\; n=m,\ldots,T
\end{array}
\end{equation}
This is the same form as \eqref{ext:cost:rhoKnownHorizon}, except that it is over the time horizon from $n=m,\ldots,T$ taking into account the portion of the cost budget that has been expended. In this problem, we only optimize over $K_{m},\ldots,K_{T}$. This problem is again a MINLP.

Next, we look at approximate solutions to \eqref{ext:cost:rhoKnownHorizon} and \eqref{ext:cost:rhoUnknownPartial}. The major difficulties in solving these programs are that the decision variables $\{K_{n}\}_{n=1}^{T}$ are integer-valued and the cost function $p(K)$ may be discontinuous at zero due to fixed costs. We consider relaxing $K_{n}$ to be real-valued and introduce a piecewise approximation $\hat{p}(K)$ of the cost functions $p(K)$:
\[
\hat{p}(K) = \left( \frac{p(K_{0})K}{K_{0}} \right) \mathbbm{1}_{\{K \leq K_{0}\}} +  p(K)  \mathbbm{1}_{\{ K > K_{0} \}}
\]
Generally, we pick $0 < K_{0} < 1$. We consider the relaxed program
\begin{equation}
\label{ext:cost:rhoKnownRelaxation}
\arraycolsep=1.4pt\def\arraystretch{1.5}
\begin{array}{ll@{}ll}
\underset{K_{1},\ldots,K_{T}}{\text{minimize}}  & \displaystyle \phi(\xi_{1},\ldots,\xi_{T}) &\\
\text{subject to}& \displaystyle \sum_{n=1}^{T} \hat{p}(\rho,K_{n}) \leq P \\
& K_{1} \leq K_{2} & \\
& K_{n}  \leq K_{n-1} + K_{n+1} & \;\;\; n=2,\ldots,T-1 \\
& K_{T-1} \leq K_{T} & \\
& K_{n} \in \mathbb{R}_{\geq 0}  &\;\;\; n=1,\ldots,T
\end{array}
\end{equation}
We also relax the indicator constraints to inequality to encourage taking samples at consecutive times. In practice, this forces more gradual changes in samples $K_{n}$ and makes it easier to solve these problems. This problem can be readily solved by gradient based solvers such as IPOPT \cite{Wachter2006}.

When $\rho$ is unknown, we can repeatedly solve this problem using the latest estimate of $\rho$ by solving the following sequence of problems:
\begin{equation}
\label{ext:cost:rhoUnknownRelaxation}
\arraycolsep=1.4pt\def\arraystretch{1.5}
\begin{array}{ll@{}ll}
\underset{K_{n+1},\ldots,K_{T}}{\text{minimize}}  & \displaystyle \phi(\xi_{1},\ldots,\xi_{T}) &\\
\text{subject to}& \displaystyle \sum_{i=1}^{n} \hat{p}(\hat{\rho}_{i},K_{i}) \leq P - \sum_{i=n+1}^{T} \hat{p}(\hat{\rho}_{i-1},K_{i}) \\
& K_{1} \leq K_{2} & \\
& K_{n}  \leq K_{n-1} + K_{n+1} & \;\;\; n=2,\ldots,T-1 \\
& K_{T-1} \leq K_{T} & \\
& K_{n} \in \mathbb{R}_{\geq 0}  &\;\;\; n=1,\ldots,T
\end{array}
\end{equation}


\subsection{Cross Validation}
\label{ext:crossVal}

We can also apply cross-validation for model selection. Suppose we have loss functions $\ell_{\lambda}(\bw,\bz)$ parameterized by $\lambda$, which controls the model complexity. For example, we could have a quadratic penalty term
\[
\ell_{\lambda}(\bw,\bz) = \tilde{\ell}(\bw,\bz) + \frac{1}{2}\lambda \| \bw \|_{2}^{2}
\]
The value of $\lambda = 0$ corresponds to the true loss function that we want to minimize. Suppose we have $C$ different values $\lambda^{(1)},\lambda^{(2)},\ldots,\lambda^{(C)}$ of $\lambda$ under consideration. For each $\lambda^{(i)}$, we generate an approximate minimizer $\bw_{n}^{(i)}$ of
\begin{equation}
\label{ext:crossval:lambdaI}
\mathbb{E}_{\bz_{n} \sim p_{n}}\left[  \ell_{\lambda^{(i)}}(\bw,\bz_{n}) \right]
\end{equation}
We want to select the value $\lambda^{(i)}$ and corresponding $\bw_{n}^{(i)}$ that achieves the smallest loss
\begin{equation}
\label{ext:crossval:lambda0}
\mathbb{E}_{\bz_{n} \sim p_{n}} \left[ \ell_{0}(\bw_{n}^{(i)},\bz_{n}) \right]
\end{equation}
We generate an approximate minimizer $\bw_{n}^{(i)}$ for each problem in \eqref{ext:crossval:lambdaI} starting from $\bw_{n-1}^{(i)}$. To select the best choice of $\lambda^{(i^{*})}$ in terms of minimizing \eqref{ext:crossval:lambda0}, we apply cross-validation and set $\bw_{n} = \bw_{n}^{(i^{*})}$ \cite{Hastie2001}.

The idea behind cross-validation is to divide the training samples $\{\bz_{n}(k)]\}_{k=1}^{K_{n}}$ into $P$ equal sized pieces. For every $P-1$ out of $P$ pieces, we use the $P-1$ pieces of the training set to generate an approximate solution $\tilde{\bw}_{n}^{(i)}$ to \eqref{ext:crossval:lambdaI}. We use the remaining piece of the training set to evaluate the empirical test loss achieved by $\tilde{\bw}_{n}^{(i)}$ using a sample average approximation. We do this for every possible choice of $P-1$ out of $P$ pieces and average the empirical test loss estimates. We then select the value $\lambda^{(i^{*})}$ that achieves the smallest empirical test loss.

To apply cross-validation to our framework, we run $C$ parallel versions of our approach and at time $n$ we generate $C$ different choices for the number of samples $K_{n}^{(i)}$. We then choose
\[
K_{n} = \max\{K_{n}^{(1)},\ldots,K_{n}^{(C)}\}
\]
After choosing $K_{n}$, we apply the usual cross-validation approach to select $\lambda^{(i)}$ for time $n$.
Fig.~\ref{withRhoUnknown:crossValApproach} shows this approach for two values of $\lambda$.

\begin{figure}[!ht]
	\centering
	\includegraphics[width = 1\linewidth]{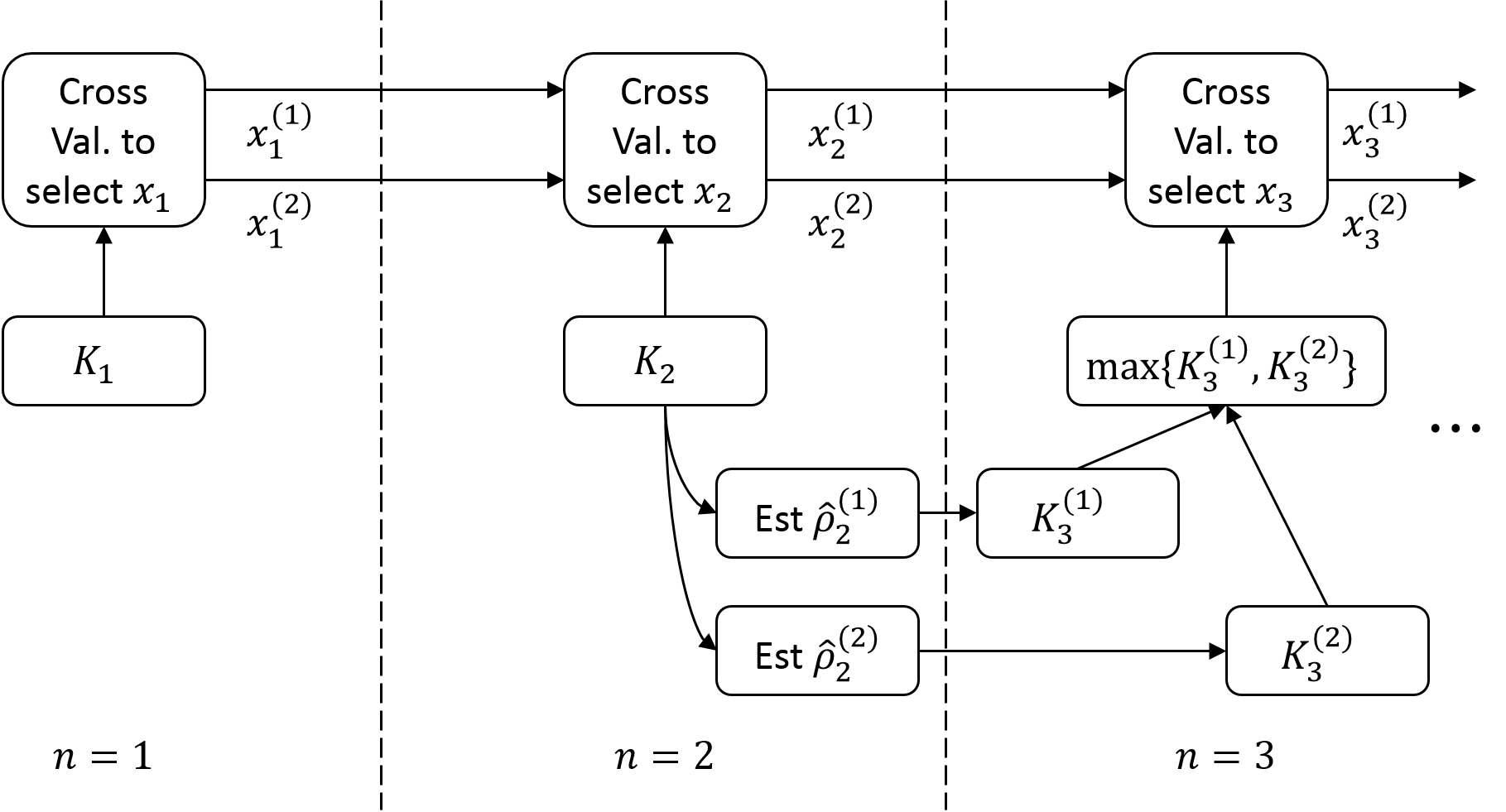}
	\caption{Cross validation approach}
	\label{withRhoUnknown:crossValApproach}
\end{figure}	

\section{Experiments}
\label{exper}

We provide two regression examples for synthetic and real data as well as a classification example for synthetic data. For the synthetic regression problem, we can explicitly compute $\rho$ and $\bw_{n}^{*}$ and exactly evaluate the performance of our method. It is straightforward to check that all requirements in \ref{probState:assump1}-\ref{probState:assump6} are satisfied for the problems considered in this section. We apply the ``do not update past excess risk" choice of $K_{n}$ here.

\subsection{Synthetic Regression}
\label{c:mlApplications:synReg}

Consider a regression problem with synthetic data using the penalized quadratic loss
\[
\ell(\bw,\bz) = \frac{1}{2} \left( y - \bx^{\top} \bw \right)^{2} + \frac{1}{2} \lambda \| \bw \|_{2}^{2}
\]
with $\bz = (\bx,y) \in \mathbb{R}^{3}$. We further assume that
\[
\bz_{n} \sim \mathcal{N}\left( \bm{0}, \left[ \begin{array}{cc}
\sigma_{\bx}^{2} \bm{I} & r_{\bx_{n},y_{n}} \\
r_{\bx_{n},y_{n}}^{\top} & \sigma_{y_{n}}^{2}
\end{array} \right] \right)
\]
Under these assumptions, we can analytically compute minimizers $\bw_{n}^{*}$ of $f_{n}(\bw) = \mathbb{E}_{\bz_{n} \sim p_{n}} \left[ \ell(\bw,\bz_{n})  \right]$. We change only $r_{\bx_{n},y_{n}}$ and $\sigma_{y_{n}}^{2}$ appropriately to ensure that $\| \bw_{n}^{*} - \bw_{n-1}^{*} \|_{2} = \rho$ holds for all $n$. We find approximate minimizers using SGD with $\lambda = 0$. We estimate $\rho$ using the direct estimate.

We let $n$ range from $1$ to $20$ with $\rho = 1$, a target excess risk $\epsilon = 0.1$, and $K_{n}$ from \eqref{withRhoUnknown:KnChoice}. We average over twenty runs of our algorithm. \figurename{}~\ref{exper:synthRegress:rhoEst} shows $\hat{\rho}_{n}$, our estimate of $\rho$, which is above $\rho$ in general. \figurename{}~\ref{exper:synthRegress:Kn} shows the number of samples $K_{n}$, which settles down. We can exactly compute $f_{n}(\bw_{n}) - f_{n}(\bw_{n}^{*})$, and so by averaging over the twenty runs of our algorithm, we can estimate the excess risk (denoted ``sample average estimate''). We over the time horizon from $n=1$ to $25$ to yield the sample average estimate excess risk given by $2.797 \times 10^{-2} \pm 1.071 \times 10^{-2}$. Therefore, we see that we achieve our desired excess risk.

\iftoggle{useTwoColumn}{
	\begin{figure}[!ht]
		\centering
		\includegraphics[width = \linewidth]{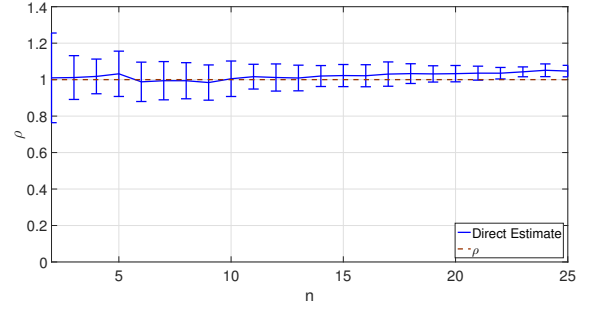}
		\caption{$\rho$ estimate for synthetic regression.}
		\label{exper:synthRegress:rhoEst}
	\end{figure}
	\begin{figure}[!ht]
		\centering
		\includegraphics[width = \linewidth]{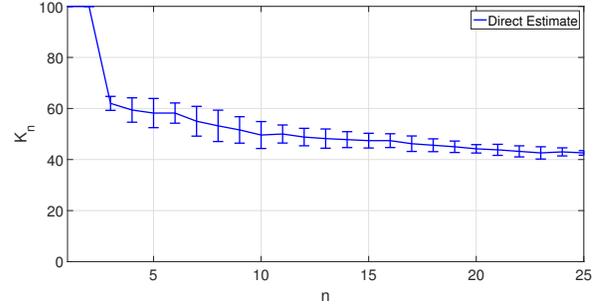}
		\caption{$K_{n}$ for synthetic regression.}
		\label{exper:synthRegress:Kn}
	\end{figure}
}{
\begin{figure}[!ht]
	\centering
	\begin{minipage}[b]{0.48\linewidth} \quad
		\centering
		\includegraphics[width = \linewidth]{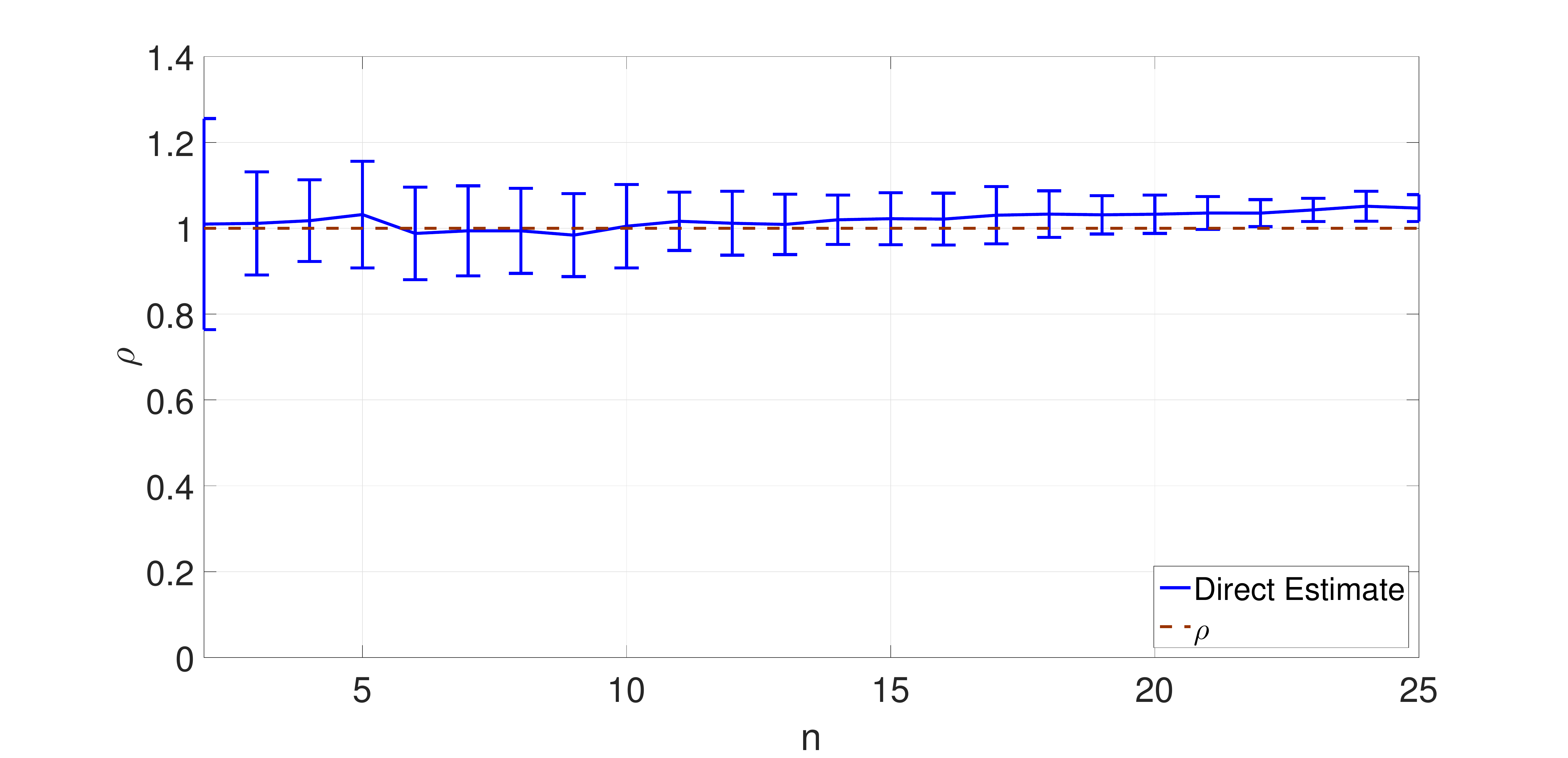}
		\caption{$\rho$ Estimate}
		\label{exper:synthRegress:rhoEst}
	\end{minipage}
	\begin{minipage}[b]{0.48\linewidth}
		\centering
		\includegraphics[width = \linewidth]{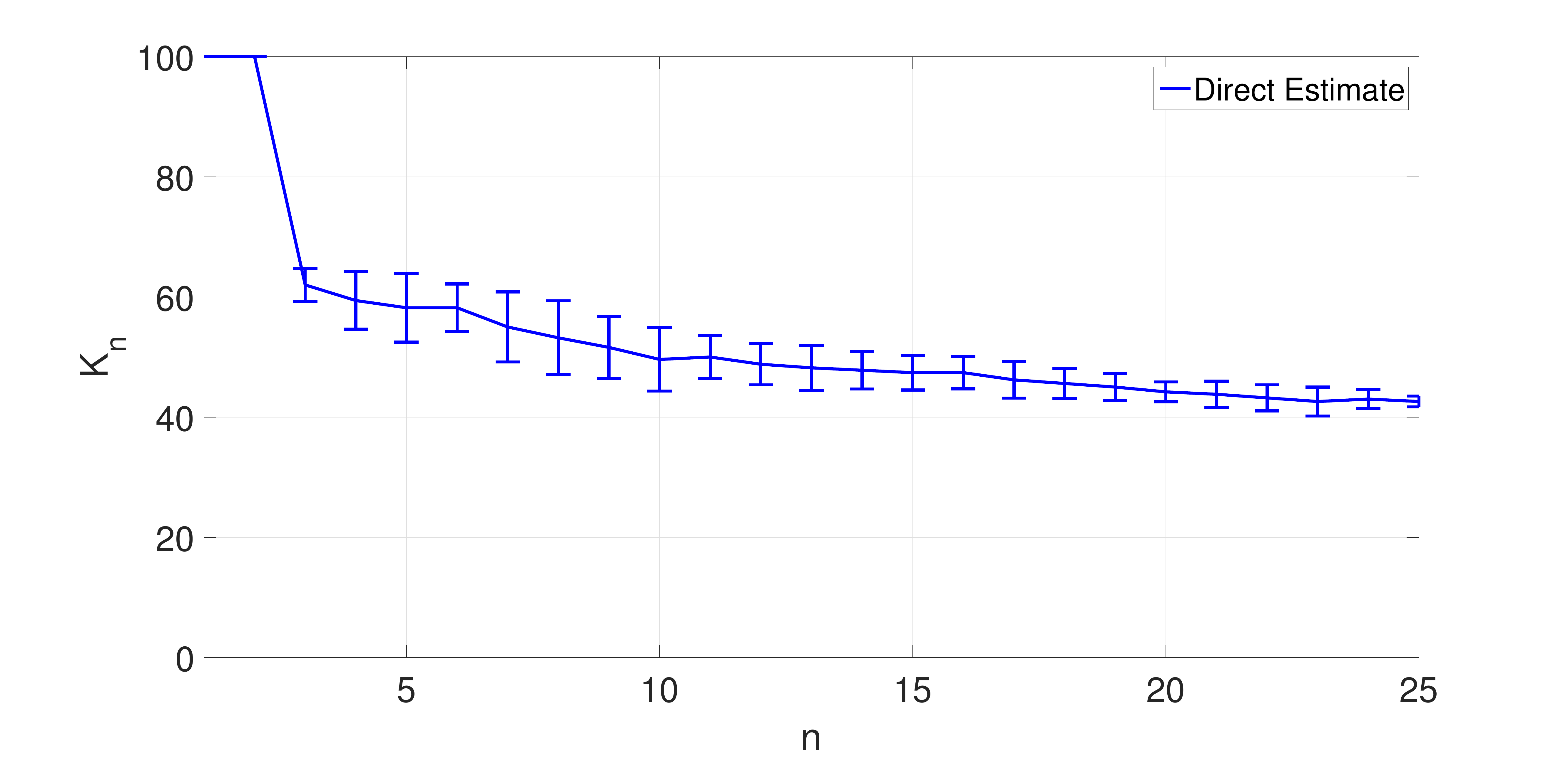}
		\caption{$K_{n}$}
		\label{exper:synthRegress:Kn}
	\end{minipage}
\end{figure}
}

\subsubsection{Cost Approach}
We consider applying the cost approach in Section~\ref{ext:cost} to the synthetic regression problem with the cost in \eqref{ext:cost:costExample}. We compare the optimal cost approach introduced in \eqref{ext:cost:rhoKnownRelaxation} of Section~\ref{ext:cost} to the approach in \eqref{withRhoUnknown:KnChoice}, taking all samples at time $n=1$ as in \eqref{ext:cost:costUpFront}, and taking samples every five time instants as in \eqref{ext:cost:costPeriodic}. Note that the method from \eqref{withRhoUnknown:KnChoice} does not satisfy the cost budget. Fig.~\ref{exper:synthRegress:cost:testLoss} shows the test loss of these approaches. We achieve similar test loss to the method in \eqref{withRhoUnknown:KnChoice} and better than the other two methods.  Fig.~\ref{exper:synthRegress:cost:Kn} shows the number of samples selected for both methods. At some time instants, our optimal cost approach does not take samples.

\iftoggle{useTwoColumn}{
	\begin{figure}[!ht]
		\centering
		\iftoggle{useMeanGap}{
			\includegraphics[width = \linewidth]{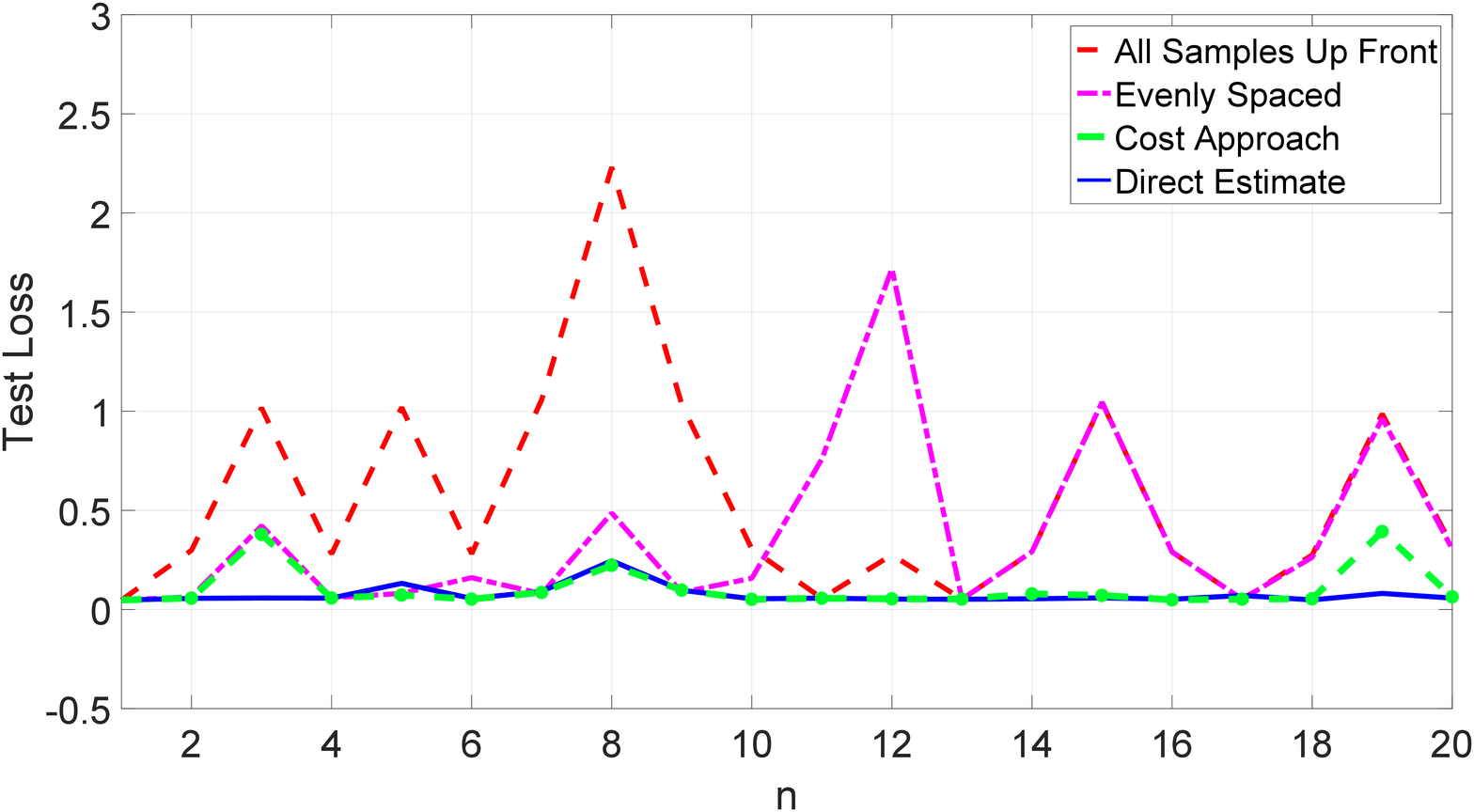}
		}{
		\includegraphics[width = \linewidth]{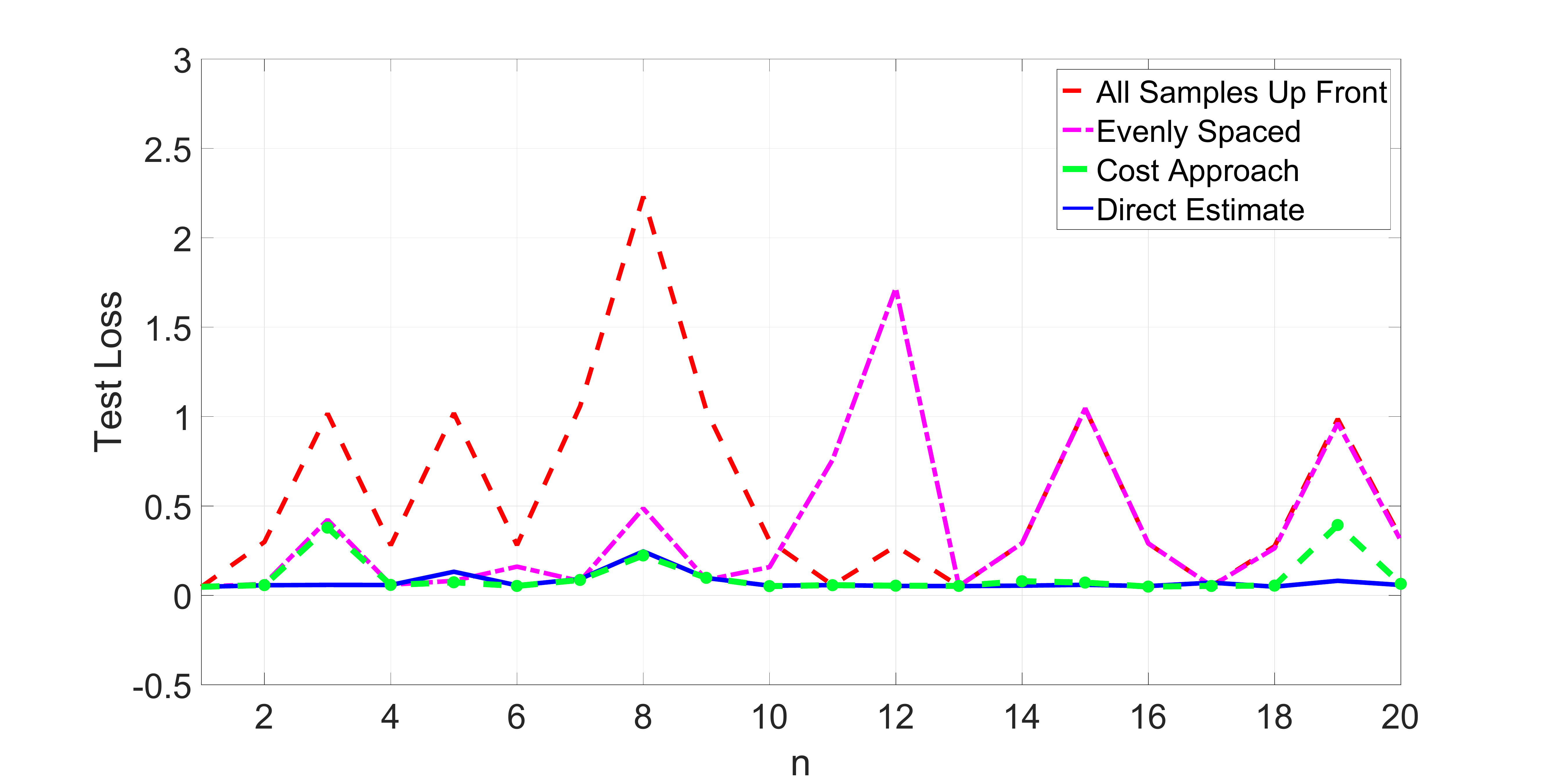}
	}
	\caption{Test Loss for synthetic regression with cost approach.}
	\label{exper:synthRegress:cost:testLoss}
\end{figure}
\begin{figure}[!ht]
	\centering
	\iftoggle{useMeanGap}{
		\includegraphics[width = \linewidth]{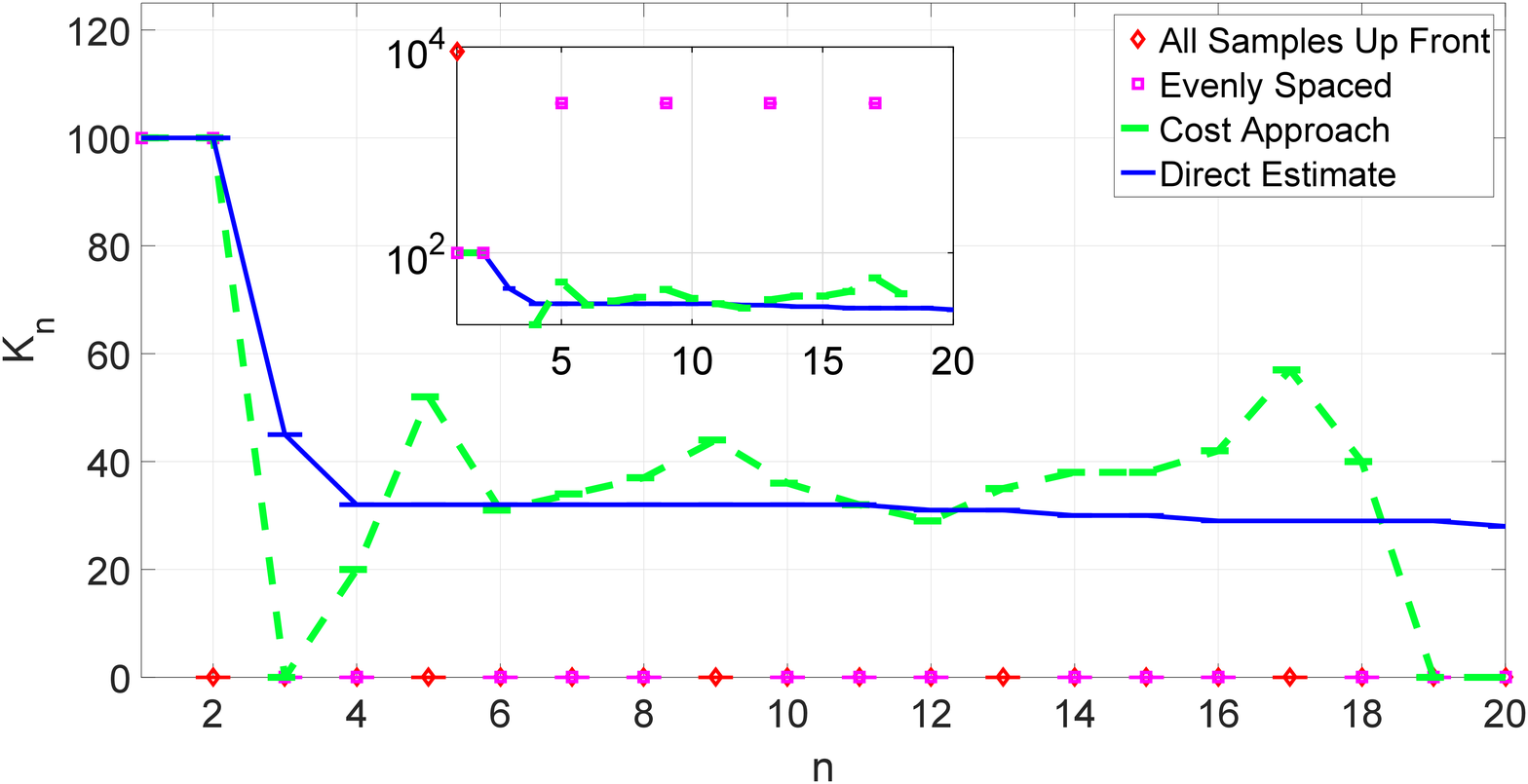}
	}{
	\includegraphics[width = \linewidth]{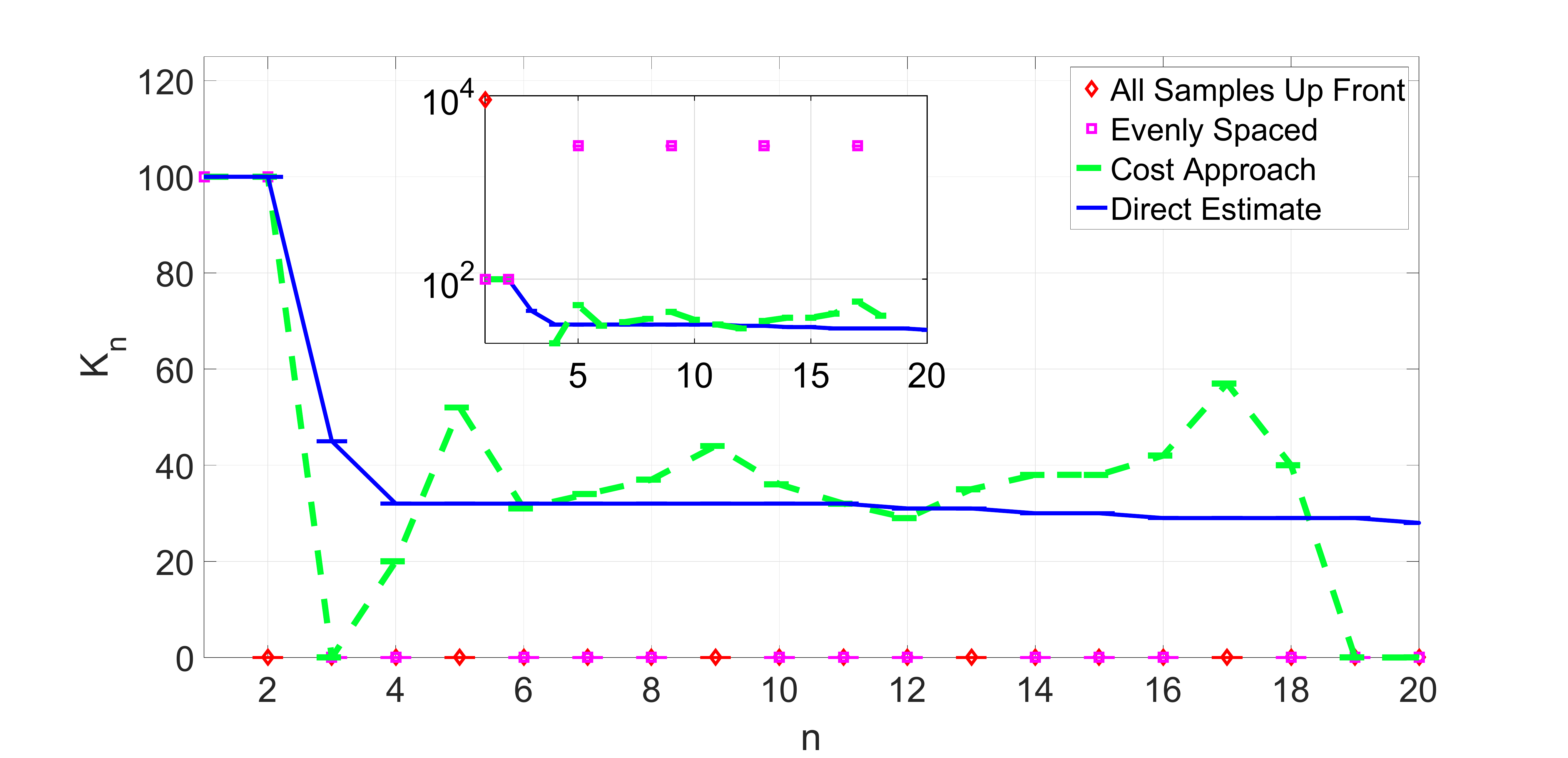}
}
\caption{Cost Choice of $K_{n}$ for synthetic regression.}
\label{exper:synthRegress:cost:Kn}
\end{figure}
}{
\begin{figure}[!ht]
	\centering
	\begin{minipage}[b]{0.48\linewidth} \quad
		\centering
		\iftoggle{useMeanGap}{
			\includegraphics[width = \linewidth]{lossOneShot_new}
		}{
		\includegraphics[width = \linewidth]{lossOneShot_new}
	}
	\caption{Test Loss for synthetic regression with cost approach.}
	\label{exper:synthRegress:cost:testLoss}
\end{minipage}
\begin{minipage}[b]{0.48\linewidth}
	\centering
	\iftoggle{useMeanGap}{
		\includegraphics[width = \linewidth]{KnOneShot_new}
	}{
	\includegraphics[width = \linewidth]{KnOneShot_new}
}
\caption{Cost Choice of $K_{n}$ for synthetic regression.}
\label{exper:synthRegress:cost:Kn}
\end{minipage}
\end{figure}
}

This problem is an example of one where the initial distance term in $b(d_{0},K)$ per the discussion from Section~\ref{rhoBoundOvershoot} matters. This is evidenced by the fact that when we do not take samples after the first time instant the test loss can grow large quickly as shown in Fig.~\ref{exper:synthRegress:cost:testLoss}.

\subsection{Synthetic Classification}

Consider a binary classification problem using
\[
\ell(\bw,\bz) = \frac{1}{2} ( 1 - y ( \bx^{\top} \bw ) )_{+}^{2} + \frac{1}{2} \lambda \| \bw \|_{2}^{2}
\]
with ${\bz = (\bx,y) \in \mathbb{R}^{d} \times \mathbb{R}}$ and $(y)_{+} = \max\{y,0\}$. This is a smoothed version of the hinge loss used in support vector machines (SVM) \cite{Hastie2001}. We suppose that at time $n$, the two classes have features drawn from a Gaussian distribution with covariance matrix $\sigma^{2} \bm{I}$ but different means $\mu_{n}^{(1)}$ and $\mu_{n}^{(2)}$, i.e.,
\[
\bx_{n} \;|\; \{y_{n} = i\} \;\sim\; \mathcal{N}(\mu_{n}^{(i)} , \sigma^{2} \bm{I})
\]
The class means move slowly over uniformly spaced points on a unit sphere in $\mathbb{R}^{d}$ as in \figurename{}~\ref{exper:synthClass:ClassMeans} to ensure that the constant Euclidean norm condition defined in \eqref{probState:slowChangeConstDef} holds. We find approximate minimizers using SGD with $\lambda = 0.1$. We estimate $\rho$ using the direct estimate.

\begin{figure}[!ht]
	\centering
	\includegraphics[width = 0.4\linewidth]{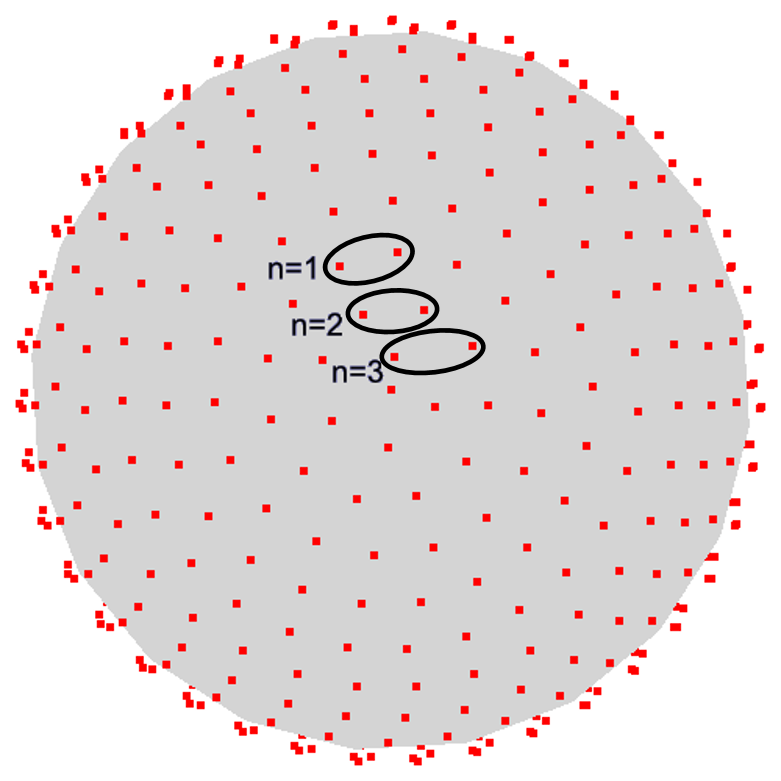}
	\caption{Evolution of Class Means}
	\label{exper:synthClass:ClassMeans}
\end{figure}

We let $n$ range from $1$ to $25$ and target a excess risk $\epsilon = 0.1$. We average over twenty runs of our algorithm. As a comparison, if our algorithm takes $\{K_{n}\}_{n=1}^{25}$ samples, then we consider taking $\sum_{n=1}^{25} K_{n}$ samples up front at $n=1$. This is what we would do if we assumed that our problem is not time varying. \figurename{}~\ref{exper:synthClass:rhoEst} shows $\hat{\rho}_{n}$, our estimate of $\rho$. \figurename{}~\ref{exper:synthClass:testLoss} shows the average test loss for both sampling strategies. To compute test loss we draw $T_{n}$ additional samples $\{\bz_{n}^{\text{test}}(k)\}_{k=1}^{T_{n}}$ from $p_{n}$ and compute $\frac{1}{T_{n}} \sum_{k=1}^{T_{n}} \ell(\bw_{n},\bz_{n}^{\text{test}}(k))$. We see that our approach achieves substantially smaller test loss than taking all samples up front. We do not draw the error bars on this plot as it makes it difficult to see the actual losses achieved.

To further evaluate our approach we look at the receiver operating characteristic (ROC) of our classifiers. The ROC is a plot of the probability of a true positive against the probability of a false positive. The area under the curve (AUC) of the ROC equals the probability that a randomly chosen positive instance ($y = 1$) will be rated higher than a negative instance ($y = -1$) \cite{Fawcett2006}. Thus, a large AUC is desirable. \figurename{}~\ref{exper:synthClass:AUC} plots the AUC of our approach against taking all samples up front. Our sampling approach achieve a substantially larger AUC.

\iftoggle{useTwoColumn}{
	\begin{figure}[!ht]
		\centering
		\includegraphics[width = \linewidth]{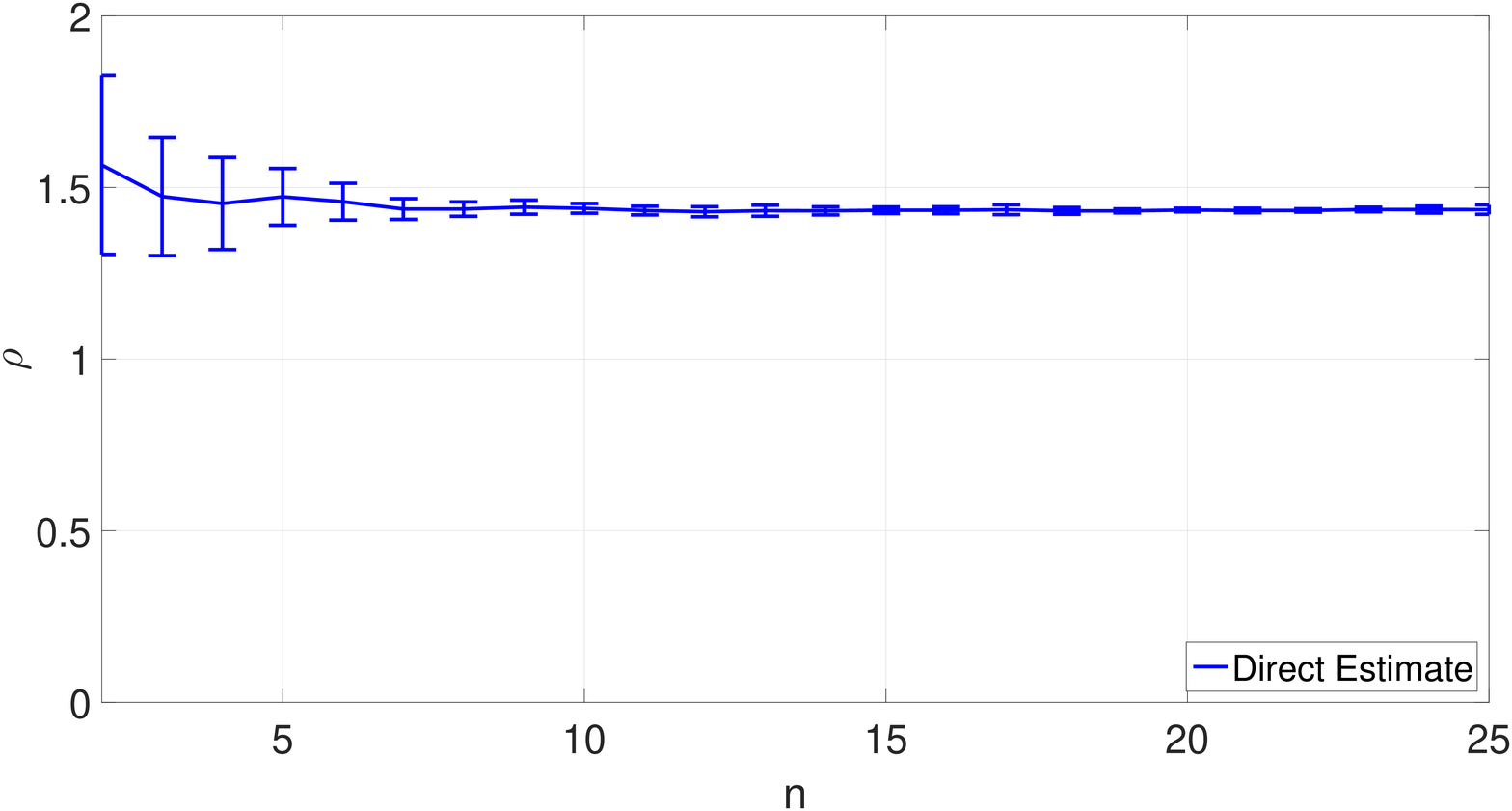}
		\caption{$\rho$ estimate for synthetic classification.}
		\label{exper:synthClass:rhoEst}
	\end{figure}%
	\begin{figure}[!ht]
		\centering
		\includegraphics[width = \linewidth]{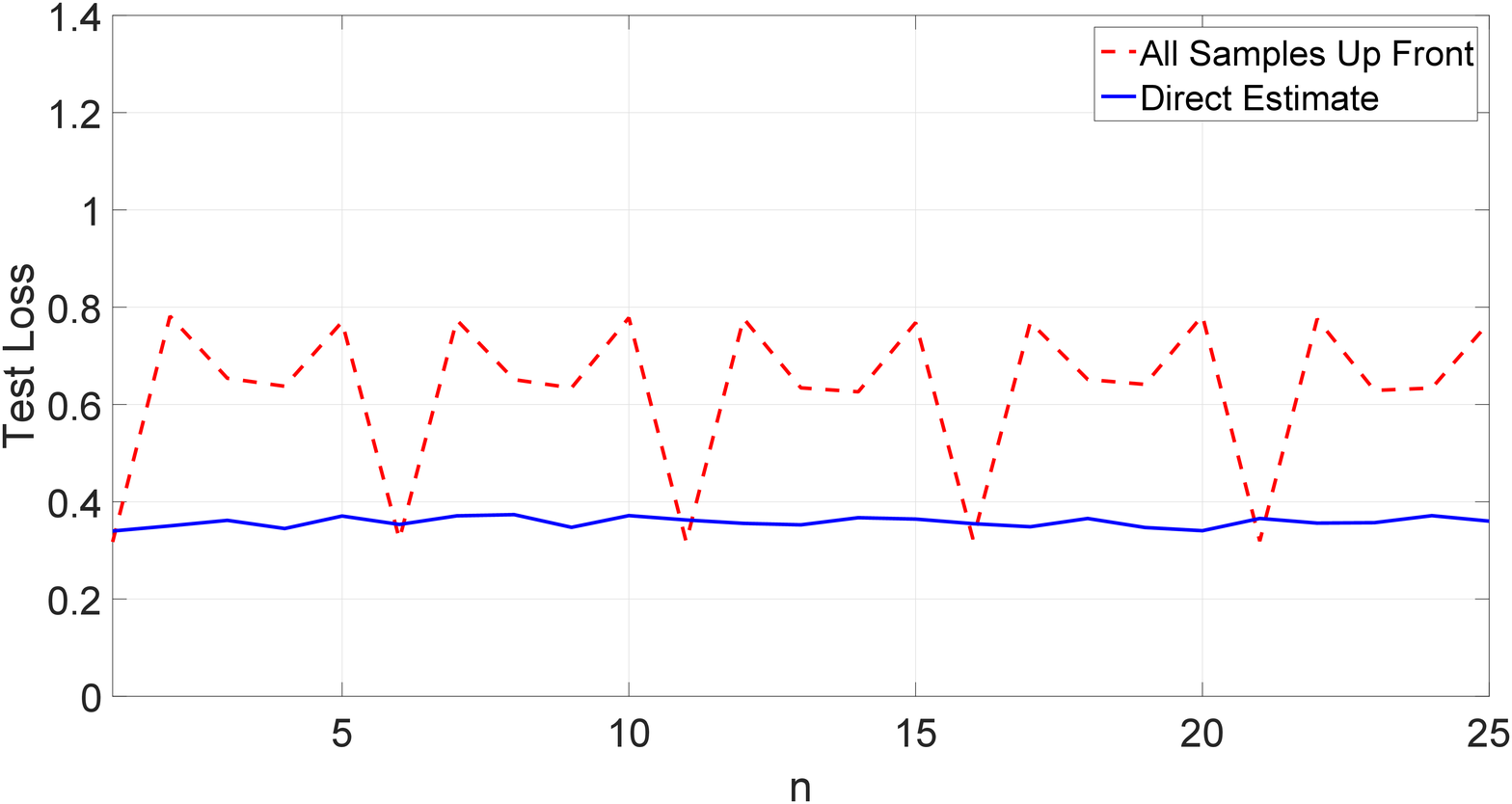}
		\caption{Test loss for synthetic classification.}
		\label{exper:synthClass:testLoss}
	\end{figure}%
	\begin{figure}[!ht]
		\centering
		\includegraphics[width = \linewidth]{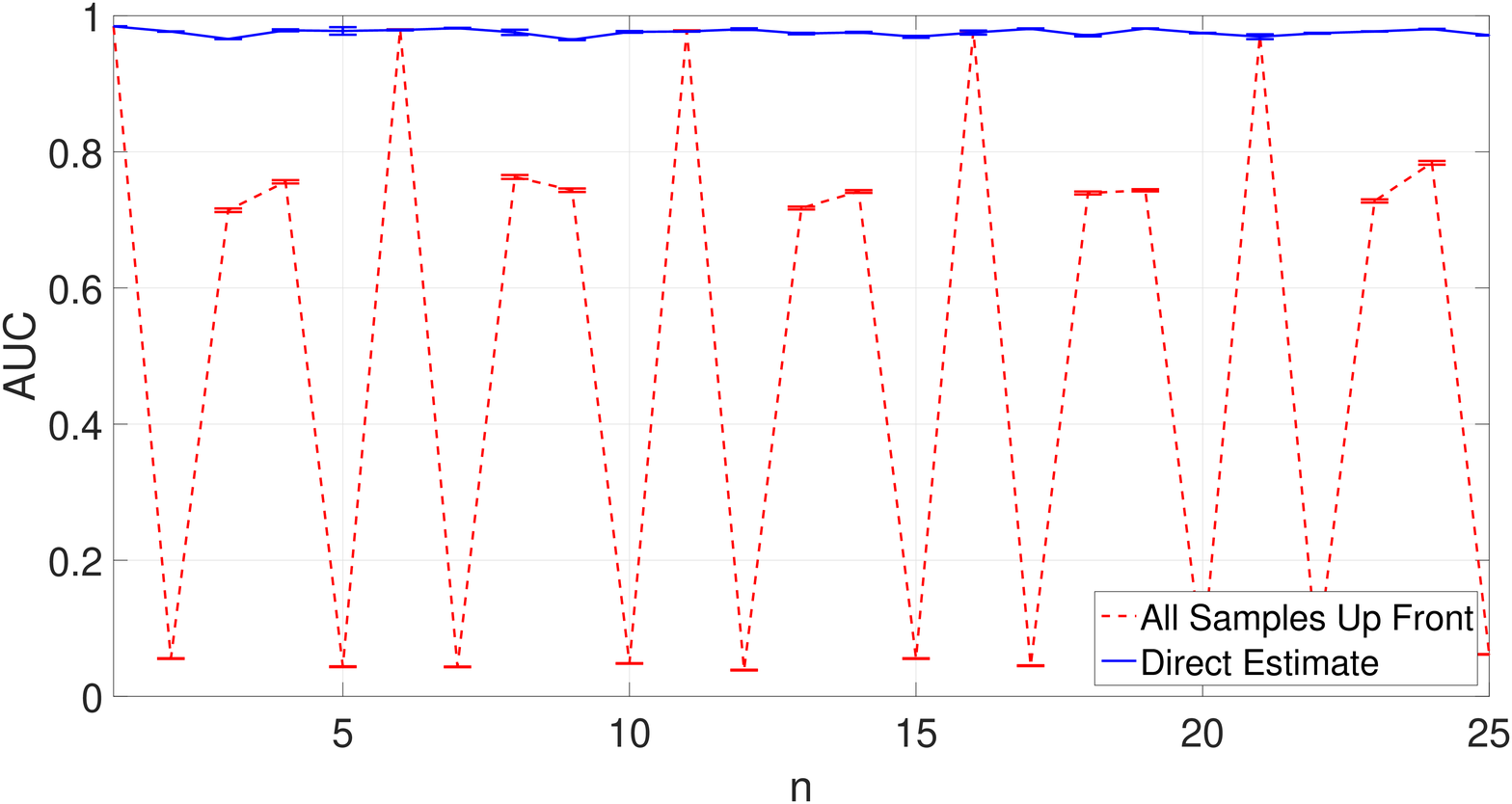}
		\caption{Area Under the Curve for synthetic classification.}
		\label{exper:synthClass:AUC}
	\end{figure}%
}{
\begin{figure}[!ht]
	\centering
	\begin{minipage}{0.5\textwidth}
		\centering
		\includegraphics[width = 1 \textwidth]{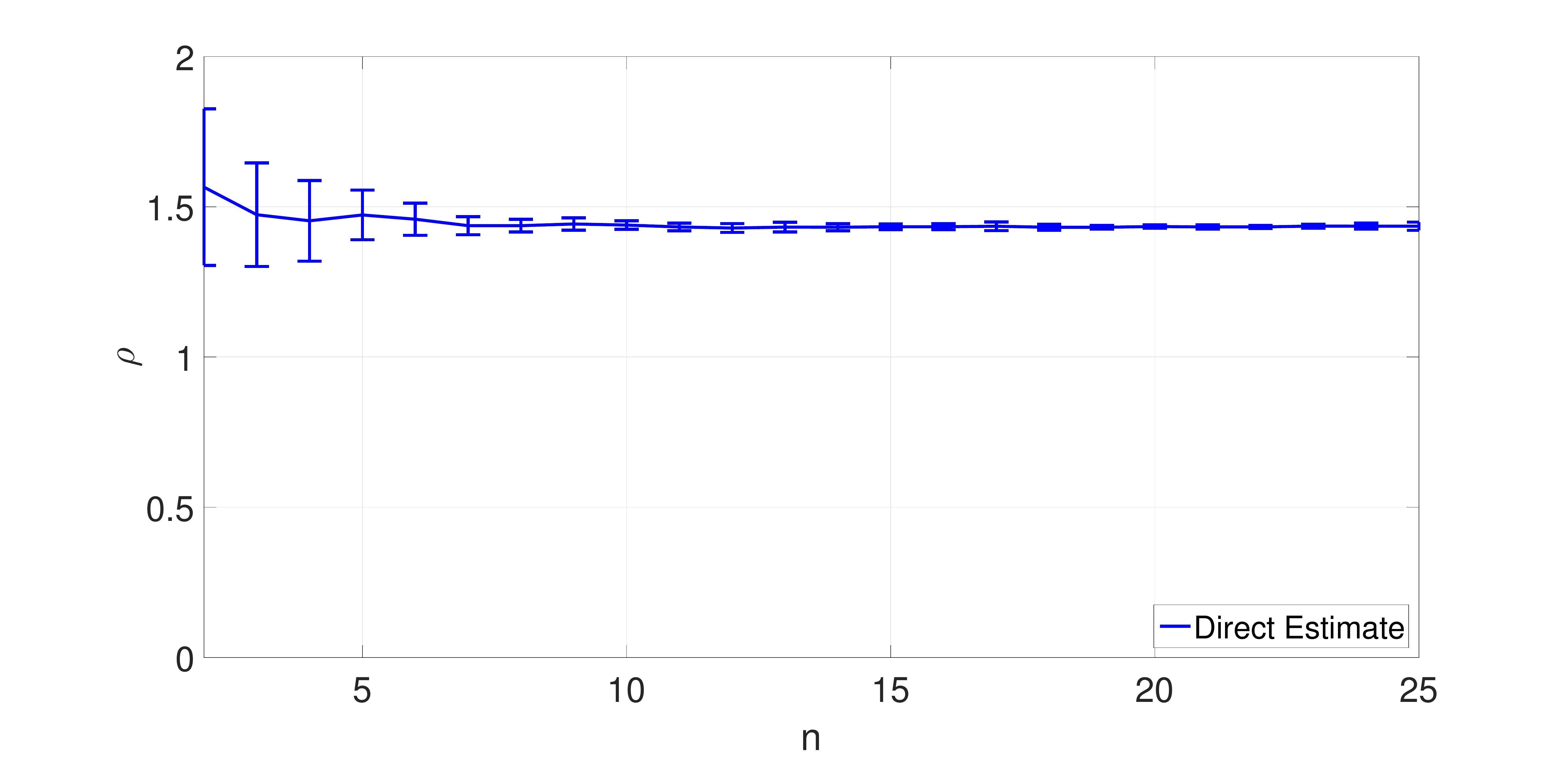}
		\caption{$\rho$ estimate for synthetic classification.}
		\label{exper:synthClass:rhoEst}
	\end{minipage}%
	\begin{minipage}{0.5\textwidth}
		\centering
		\includegraphics[width = 1 \textwidth]{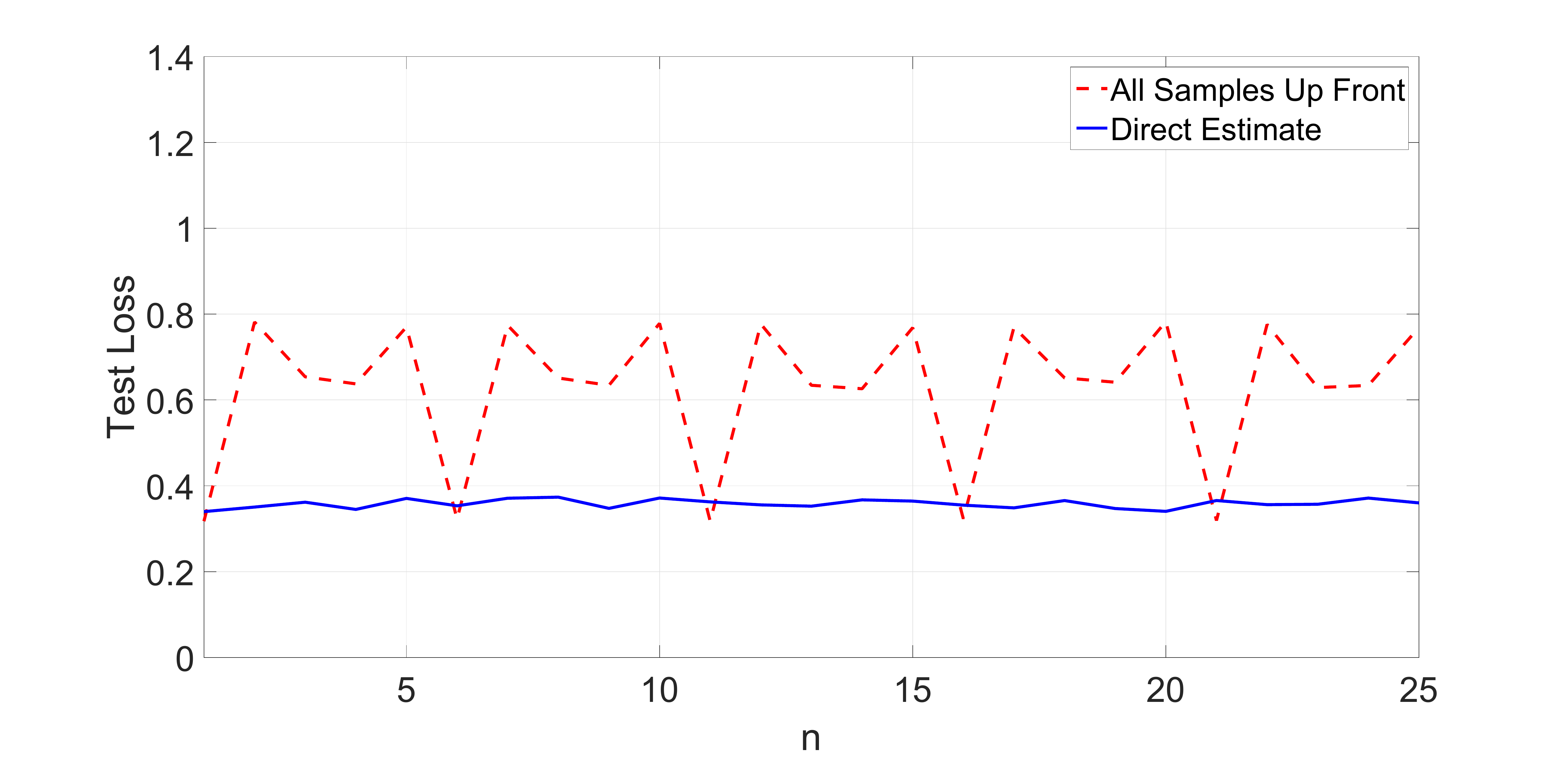}
		\caption{Test loss for synthetic classification.}
		\label{exper:synthClass:testLoss}
	\end{minipage}%
\end{figure}
}

\subsection{Panel Study on Income Dynamics Income - Regression}
The Panel Study of Income Dynamics (PSID) surveyed individuals every year to gather demographic and income data annually from 1974-2012 \cite{PSID2015}. We want to predict an individual's annual income ($y$) from several demographic features ($\bx$) including age, education, work experience, etc. chosen based on previous economic studies in \cite{Murphy1990}. The idea of this problem conceptually is to rerun the survey process and determine how many samples we would need if we wanted to solve this regression problem to within a desired excess risk criterion $\epsilon$.

We use the same loss function, direct estimate for $\rho$, and minimization algorithm as the synthetic regression problem. We average over twenty runs of our algorithm by resampling without replacement \cite{Hastie2001}. For the sake of comparison, given a choice of samples $\{K_{n}\}_{n=1}^{T}$ produced by our approach, we compare against taking $\sum_{n=1}^{T} K_{n}$ samples at time $n=1$ and none afterwards. Note that this is what we would do if we believed that the regression model does not change over time. We are aware of no other methods to select the number of samples $K_{n}$ to control the excess risk against which we could compare our approach.

\figurename{}~\ref{exper:psid:Kn} shows the number of samples $K_{n}$, which settles down quickly. \figurename{}~\ref{exper:psid:rhoEst} shows $\hat{\rho}_{n}$. \figurename{}~\ref{exper:psid:testLosses} shows the test losses over time evaluated over twenty percent of the available samples. The test loss for our approach is substantially less than that obtained by taking the same number of samples up front. 

\iftoggle{useTwoColumn} {
	\begin{figure}[!ht]
		\centering
		\includegraphics[width = \linewidth]{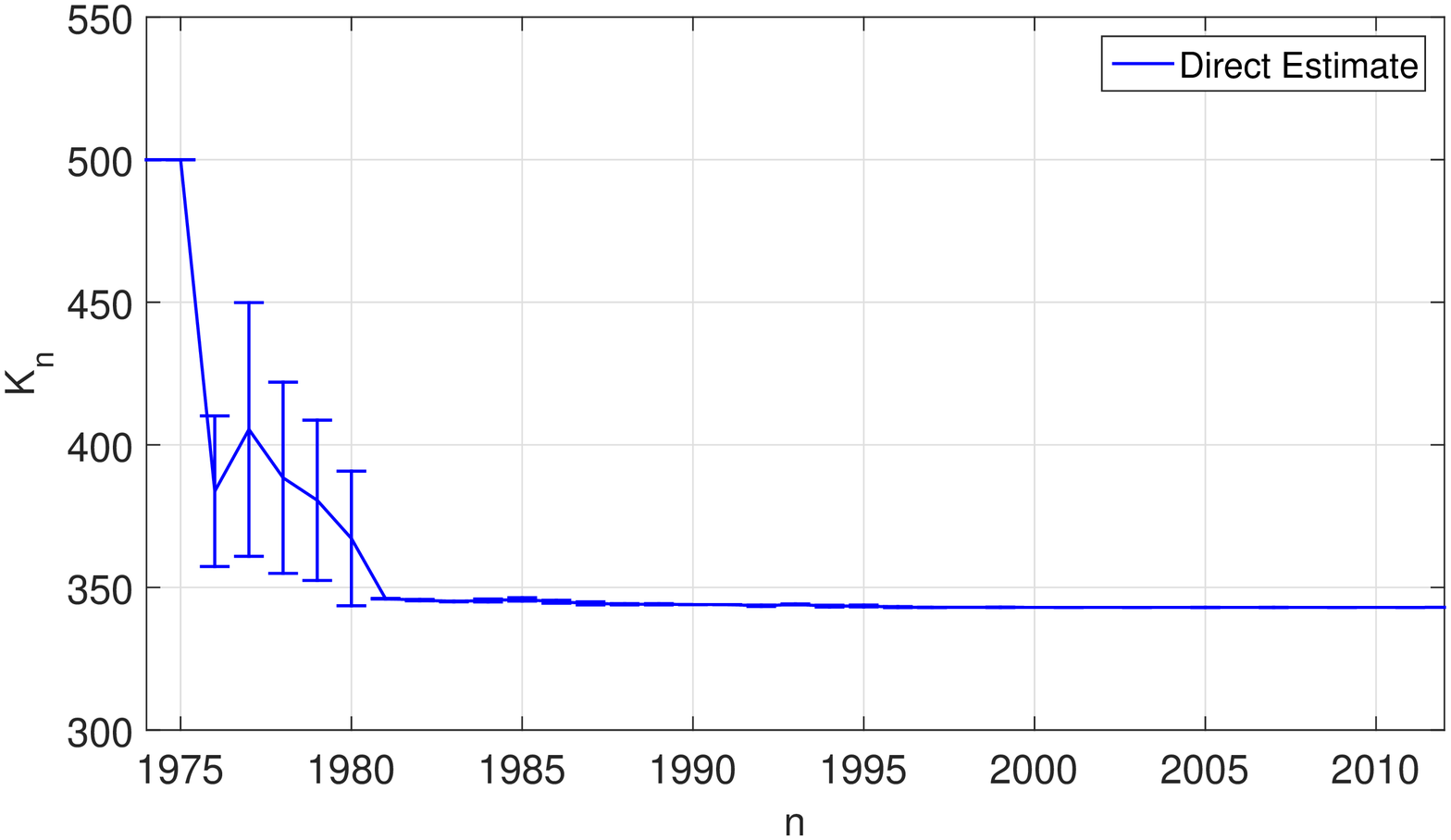}
		\caption{$K_{n}$}
		\label{exper:psid:Kn}
	\end{figure}
	\begin{figure}[!ht]
		\centering
		\includegraphics[width = \linewidth]{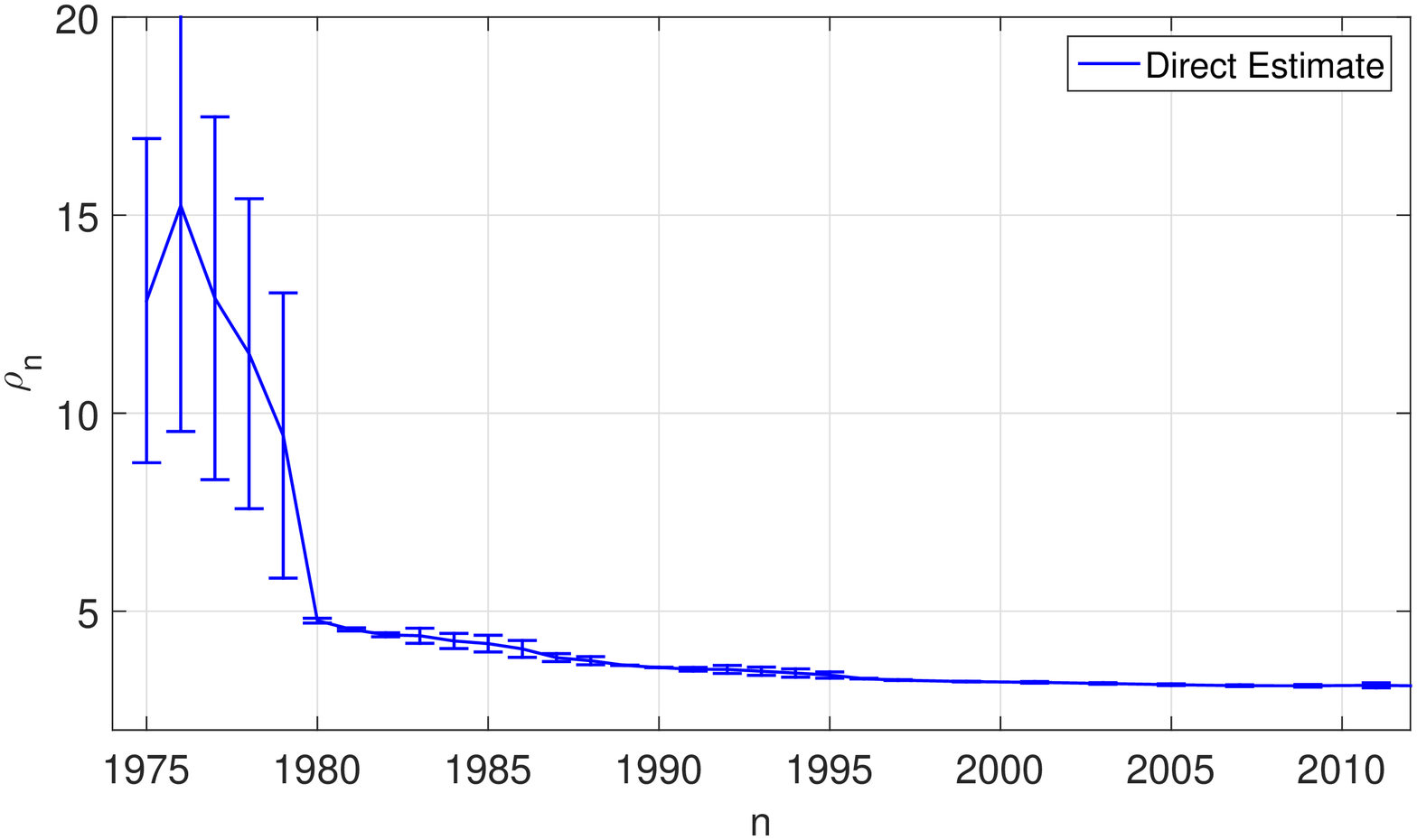}
		\caption{$\rho$ Estimate}
		\label{exper:psid:rhoEst}
	\end{figure}
	\begin{figure}[!ht]
		\centering
		\includegraphics[width = \linewidth]{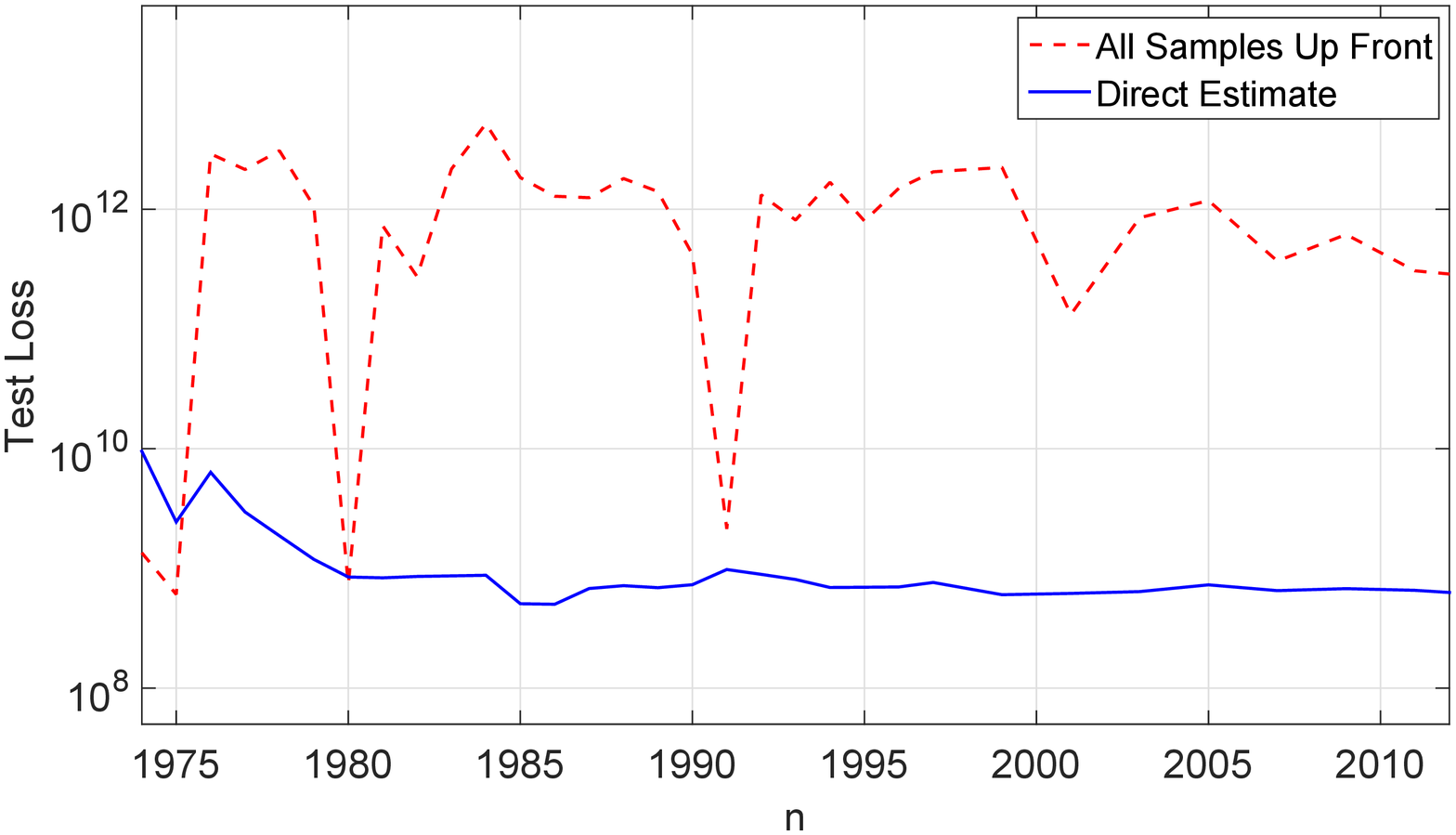}
		\caption{Test Loss}
		\label{exper:psid:testLosses}
	\end{figure}	
}{
\begin{figure}[!ht]
	\centering
	\begin{minipage}{0.5\linewidth}
		\centering
		\includegraphics[width = \linewidth]{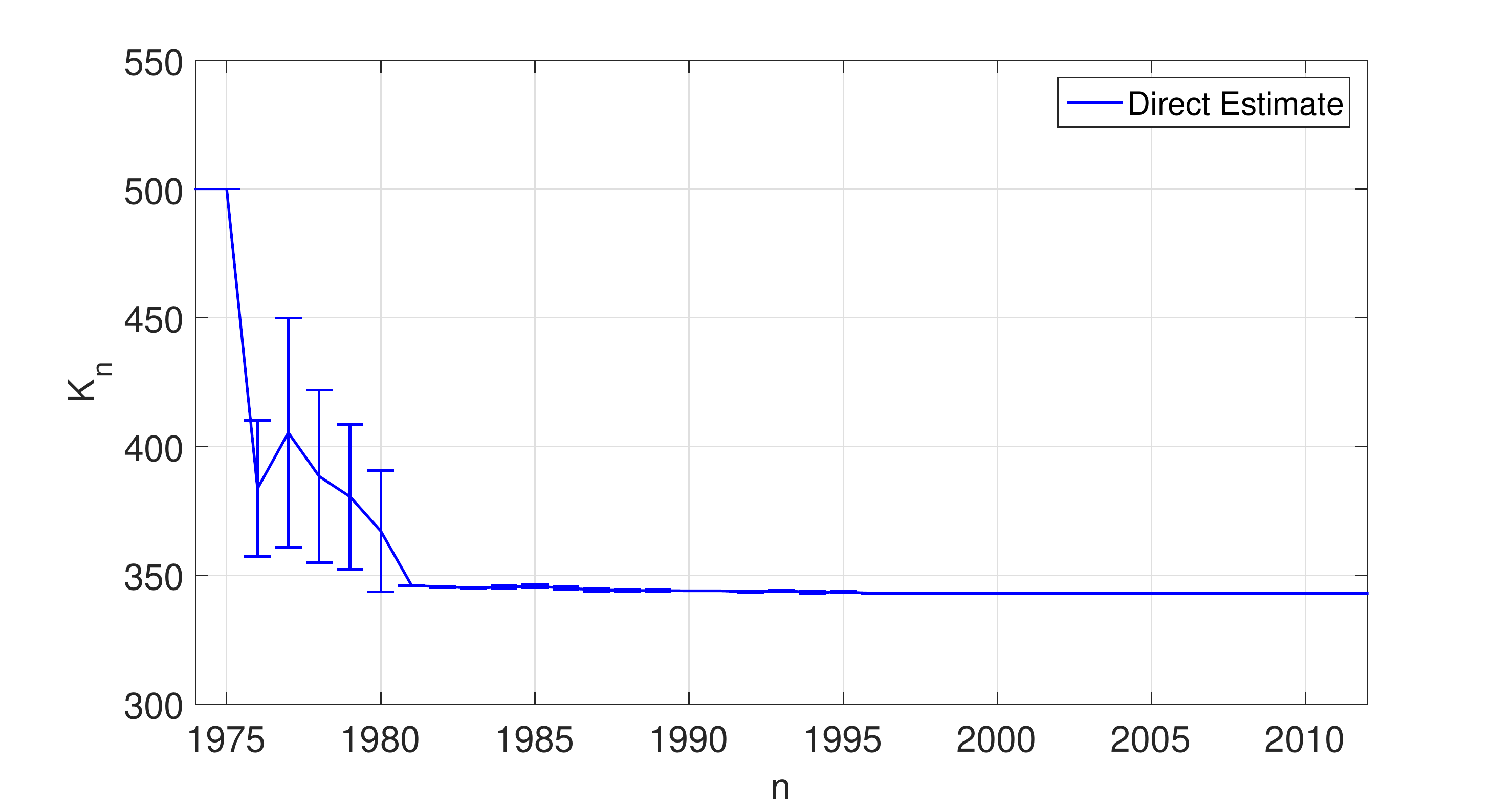}
		\caption{$K_{n}$}
		\label{exper:psid:Kn}
	\end{minipage}
\end{figure}
\begin{figure}[!ht]
	\centering
	\begin{minipage}{0.5\linewidth}
		\centering
		\includegraphics[width = \linewidth]{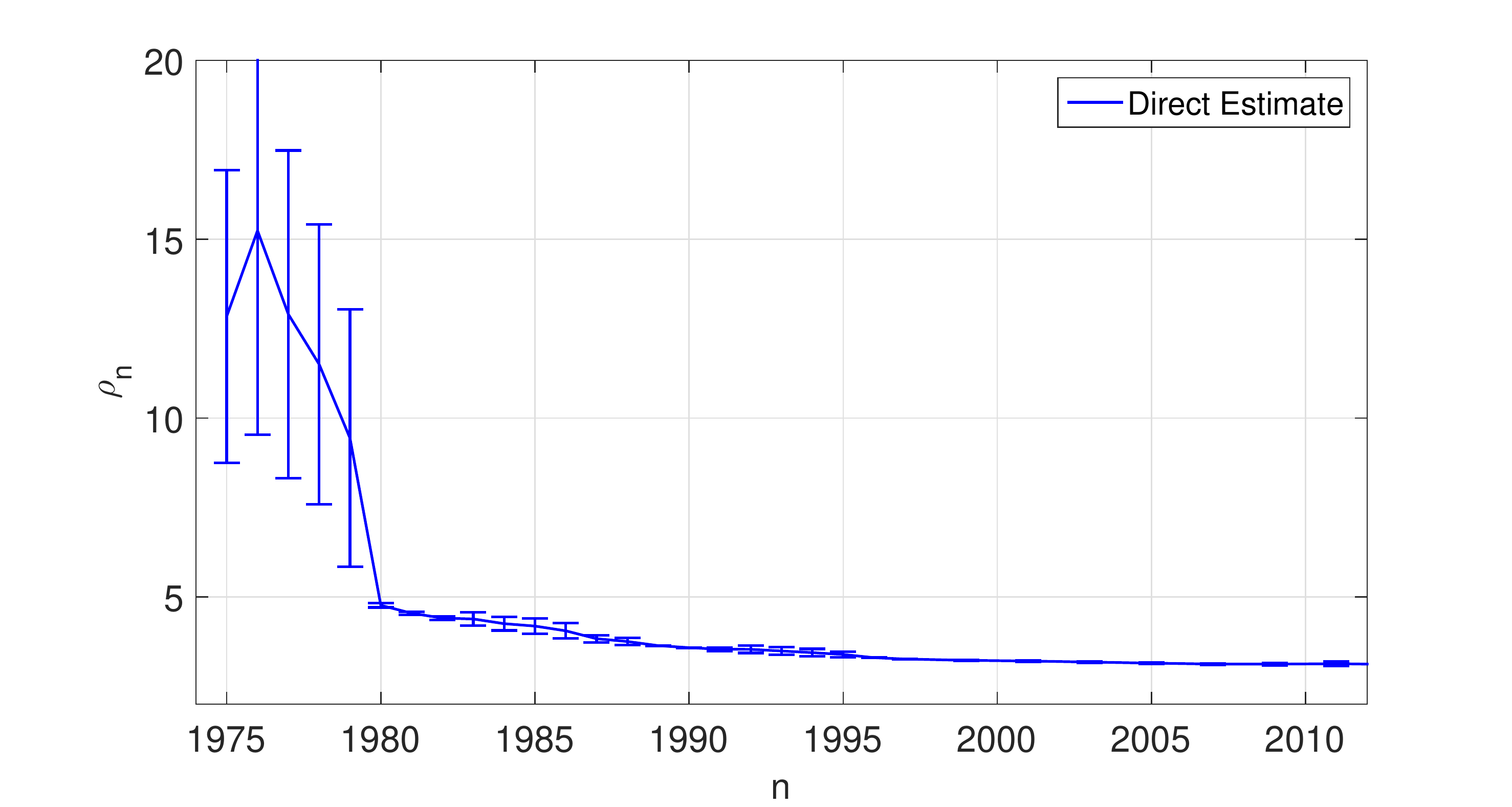}
		\caption{$\rho$ Estimate}
		\label{exper:psid:rhoEst}
	\end{minipage}%
	\begin{minipage}{0.5\linewidth}
		\centering
		\includegraphics[width = \linewidth]{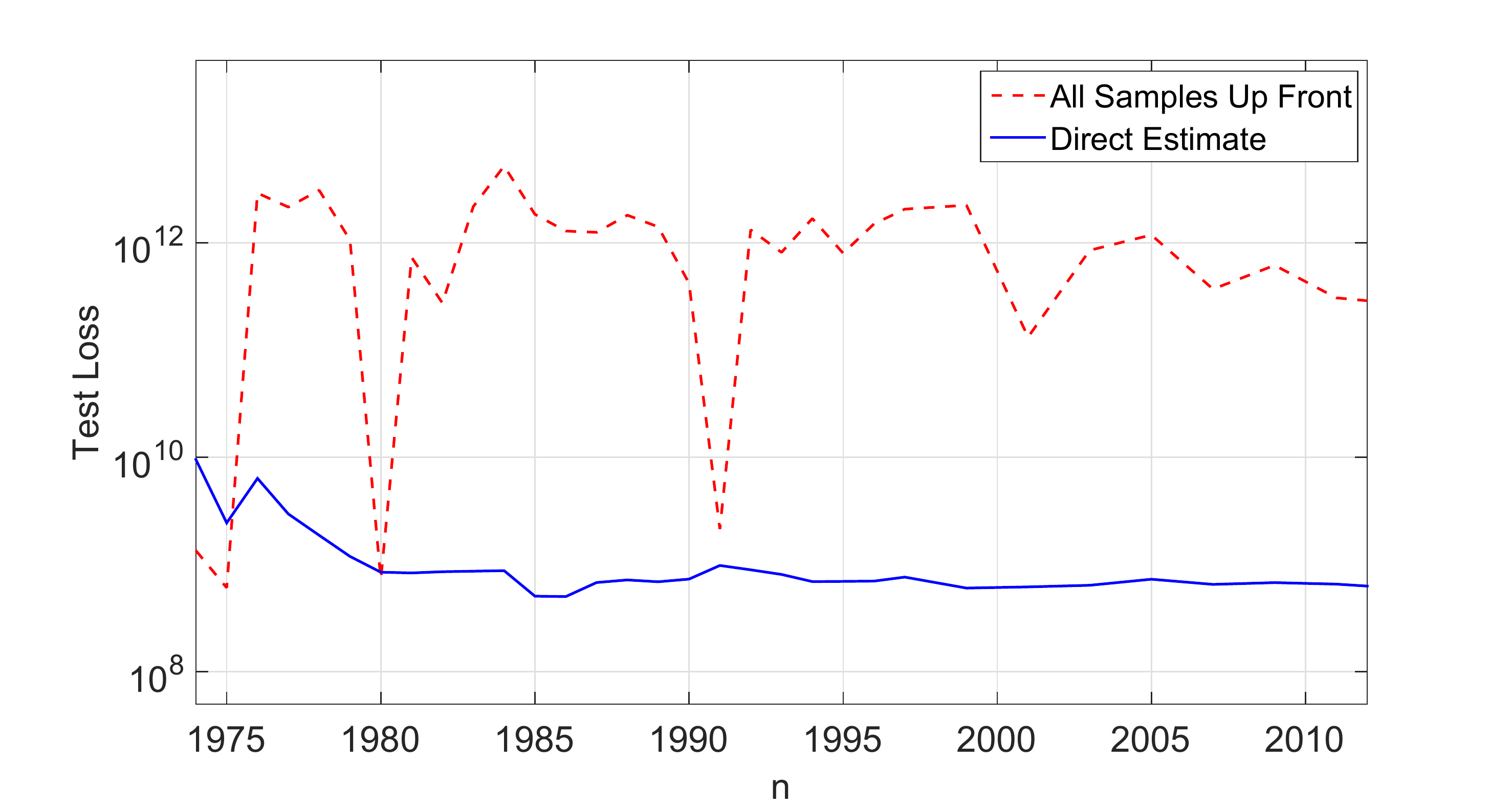}
		\caption{Test Loss}
		\label{exper:psid:testLosses}
	\end{minipage}%
\end{figure}	
}

\section{Conclusion}

We introduced a framework for adaptively solving a sequence of learning problems. We developed estimates of the change in the minimizers used to determine the number of training samples $K_{n}$ needed to achieve a target excess risk $\epsilon$. We introduced a cost based approach to select the number of samples and an approach to apply cross-validation. Experiments with synthetic and real data demonstrate that this approach is effective.

\bibliographystyle{IEEEbib}
\bibliography{ASO_ML_Journal}

\appendices

\end{document}